    \numberwithin{equation}{section}
    \theoremstyle{plain}
    \newtheorem{theorem}{Theorem}[section]
    \newtheorem{lemma}[theorem]{Lemma}
    \theoremstyle{definition}
    \newtheorem{definition}[theorem]{Definition}
    \theoremstyle{remark}
    \newtheorem{remark}[theorem]{Remark}
\newcommand{\argmin}{\operatornamewithlimits{argmin}}
\newcommand{\tr}{\operatornamewithlimits{tr}}
\newcommand{\logdet}{\operatornamewithlimits{logdet}}
\newcommand{\Var}{\operatornamewithlimits{var}}
\newcommand{\diag}{\operatornamewithlimits{diag}}
\newcommand{\sign}{\operatornamewithlimits{sign}}
\newcommand{\card}{\operatornamewithlimits{card}}
\newcommand{\prob}{\operatornamewithlimits{pr}}
\newcommand{\E}{\operatornamewithlimits{E}}
\newcommand{\T}[1]{\ensuremath{{#1}^{\mbox{\sf\tiny T}}}}
\begin{document}

%


\title{Penalized Likelihood Methods for Estimation of Sparse High Dimensional Directed Acyclic Graphs}

\author{Ali Shojaie and George Michailidis \\
        Department of Statistics, University of Michigan, Ann Arbor \\
        \texttt{shojaie@umich.edu}, \texttt{gmichail@umich.edu}}

\date{}


\maketitle

\begin{abstract}
  Directed acyclic graphs (\textsc{dag}s) are commonly used to represent causal relationships among random variables in graphical models. Applications of these models arise in the study of physical, as well as biological systems, where directed edges between nodes represent the influence of components of the system on each other. The general problem of estimating \textsc{dag}s from observed data is computationally NP-hard, Moreover two directed graphs may be observationally equivalent. When the nodes exhibit a natural ordering, the problem of estimating directed graphs reduces to the problem of estimating the structure of the network. In this paper, we propose a penalized likelihood approach that directly estimates the adjacency matrix of \textsc{dag}s. Both lasso and adaptive lasso penalties are considered and an efficient algorithm is proposed for estimation of high dimensional \textsc{dag}s. We study variable selection consistency of the two penalties when the number of variables grows to infinity with the sample size. We show that although lasso can only consistently estimate the true network under stringent assumptions, adaptive lasso achieves this task under mild regularity conditions. The performance of the proposed methods are compared to alternative methods in simulated, as well as real, data examples.
\end{abstract}


\section{Introduction}\label{intro}
 Graphical models provide efficient tools for the study of statistical models through a compact representation of the joint probability distribution of the underlying random variables. The nodes of the graph represent the random variables, while the edges capture the relationships among them. Both directed and undirected edges are used to represent interactions among random variables. However, there is a conceptual difference between these two types of edges: while undirected edges are used to represent similarity or correlation, directed edges are usually interpreted as causal relationships. The study of directed edges is therefore directly related to the theory of causality, and of main interest in many applications. A special class of directed graphical models (also known as Bayesian Networks) are based on directed acyclic graphs (\textsc{dag}s), where all the edges of the graph are directed and there are no directed cycles present in the graph. \textsc{dag}s are used in graphical models and belief networks and have been the focus of research in the computer science literature (see \citet{pearl2000caus}). Important applications involving \textsc{dag}s also arise in the study of biological systems, as many cellular mechanisms are known to include causal relationships. Cell signalling pathways and gene regulatory networks are two examples, where causal relationships play an important role \citep{markowetz2007inferring}.

 The problem of estimating \textsc{dag}s is an NP-hard problem, and estimation of direction of edges may not be possible due to observational equivalence (see section \ref{DAGproblem}). Most of the earlier methods for estimating \textsc{dag}s correspond to greedy search algorithms that search through the space of possible \textsc{dag}s. A number of methods are available for estimating the structure of \textsc{dag}s for small to moderate number of nodes. The max-min hill climbing algorithm \citep{tsamardinos2006mmh}, and the PC-Algorithm \citep{spirtes2000cpa} are two such examples. However, the space of possible \textsc{dag}s grows super-exponentially with the number of variables (nodes), and estimation of \textsc{dag}s using these methods, especially in a small $n$, large $p$ setting, becomes impractical. Bayesian methods of estimating \textsc{dag}s \citep[e.g][]{heckerman1995lbn} are also computationally very intensive and therefore not particularly appropriate for large graphs. \citet{kalisch2007ehd} recently proposed an implementation of the PC-Algorithm with polynomial complexity that can be used for estimation of high dimensional sparse \textsc{dag}s. However, when the variables inherit a \emph{natural ordering}, estimation of a \textsc{dag} is reduced to estimating its structure or skeleton. Applications with natural ordering of variables include estimation of causal relationships from temporal observations, or settings where additional experimental data can determine the ordering of variables, and estimation of transcriptional regulatory networks from gene expression data. Examples of these applications are presented in section \ref{RealData}.

 The structure of the graph can be determined from conditional independence relations among random variables. For undirected graphs, this is equivalent to learning the structure of the conditional independence graph (\textsc{cig}), which in the case of Gaussian random variables, is determined by zeros in the inverse covariance matrix (also known as precision or concentration matrix). Different penalization methods, have been recently proposed to obtain sparse estimates of the concentration matrix. \citet{meinshausen2006hdg} considered an approximation to the problem of sparse inverse covariance estimation using the lasso penalty. They showed under a set of assumptions, that their proposed method correctly determines the neighborhood of each node. \citet{banerjee2008mst} and \citet{friedman2008sic} explored different aspects of the problem of estimating the concentration matrix using the lasso penalty, while \citet{yuan2007msa} and \citet{fan2007nev} considered other choices for the penalty. \citet{RothEtal:08} proved consistency in Frobenius norm, as well as in matrix norm, of the $\ell_1$-regularized estimate of the concentration matrix when $p \gg n$, while \citet{lam2008src} extended their result and considered estimation of matrices related to the precision matrix, including the Cholesky factor of the inverse covariance matrix, using general penalties. Penalization of the Cholesky factor of the inverse covariance matrix has been also considered by \citet{huang2006cms}, where they used the lasso penalty in order to obtain a sparse estimate of the inverse covariance matrix. This method is based on the regression interpretation of the Cholesky factorization model and therefore requires the variables to be ordered \emph{a priori}.

 In this paper, we consider the problem of estimating the skeleton of \textsc{dag}s, where the variables exhibit a natural ordering. We use graph theoretic properties of \textsc{dag}s and reformulate the likelihood as a function of the adjacency matrix of the graph. We then exploit the ordering of variables to propose an efficient algorithm for estimation of structure of \textsc{dag}s, which offers considerable improvement in terms of computational complexity. Both lasso and adaptive lasso penalties are considered and variable selection consistency of estimators is established in the $p \gg n$ setting. In particular, we show that although lasso is only variable selection consistent under stringent conditions, adaptive lasso can consistently estimate the true \textsc{dag} under the usual regularity assumptions. We also present a data dependent choice of the tuning parameter that controls the probability of errors. Theoretical as well as empirical evidence shows that when the underlying causal mechanism in the network is linear, the proposed method can also be applied to non-Gaussian observations. Finally, additional simulations indicate that although the proposed method is derived based on the ordering of variables, the method is not sensitive to random permutations of the order of variables in high dimensional sparse settings.

\section{Representation of Directed Acyclic Graphs}\label{DAGproblem}
 Consider a graph $\mathcal{G} = (V,E)$, where $V$ corresponds to the set of nodes with $p$ elements and $E \subset V \times V$ to the edge set. The nodes of the graph represent random variables $X_1, \ldots, X_p$ and the edges capture associations amongst them. An edge is called directed if $(i,j) \in E \Rightarrow (j,i) \notin E$ and undirected if $(i,j) \in E \Rightarrow (j,i) \in E$. The main focus of this paper is a special class of graphs where $E$ consists of only directed edges, and does not include directed cycles. We denote by $pa_i$ the set of parents of node $i$ and for $j \in pa_i$, we denote $j \rightarrow i$. The \emph{skeleton} of a \textsc{dag} is the undirected graph that is obtained by replacing directed edges in $E$ with undirected ones. Finally, throughout this paper, we represent $E$ using the adjacency matrix $A$ of the graph; i.e. a $p\times p$ matrix whose $(j,i)$th entry indicates whether there is an edge (and possibly its weight) between nodes $j$ and $i$.

 The estimation of \textsc{dag}s is a challenging problem due to the so-called \emph{observational equivalence} of \textsc{dag}s with respect to the same probability distribution. More specifically, regardless of the sample size, it may not be possible to infer the direction of causation among random variables from observational data. As an illustration of the observational equivalence, consider the simple \textsc{dag} in the right panel of Figure \ref{ToyEx}. Reversing the direction of all edges of the graph results in a new \textsc{dag}, which is the same as the original graph, except for changes in the node labels and is therefore \emph{polymorphic} to the original one. It is therefore natural to estimate the \emph{equivalence class} of \textsc{dag}s corresponding to the same probability distribution $\mathcal{P}$ starting with the skeleton of the network.
   \begin{figure}[t!]
       \centering
       \scalebox{0.5}
       {\includegraphics [clip=TRUE, trim=0cm 0cm 0cm 0cm ]{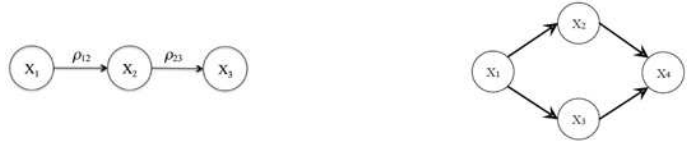}} 
        \caption{Left: A simple \textsc{dag}, Right: Illustration of observational equivalence in \textsc{dag}s}
        \label{ToyEx}
   \end{figure}
 The second challenge in estimating \textsc{dag}s is that conditional independence among random variables may not reveal the skeleton. The notion of conditional independence in \textsc{dag}s is either represented using the concept of \emph{d-separation} \citep{pearl2000caus} or the \emph{moral graph} of a \textsc{dag} \citep{lauritzen1996gm}. The moral graph is obtained by removing the directions of the graph and ``marrying'' the parents of each node. Therefore estimation of the conditional independence structure reveals the structure of the moral graph of the \textsc{dag}, which includes additional edges between parents of each node. This can also be illustrated using the simple graph in the left panel of Figure \ref{ToyEx}. Suppose $X_i, i=1, \ldots, 4$ are normally distributed with covariance matrix $\Sigma$. The only zero elements of the inverse covariance matrix are $\Sigma^{-1}_{14} = \Sigma^{-1}_{41}$, as $X_2$ and $X_3$ are connected in the moral graph of $\mathcal{G}$.

\subsection{The Latent Variable Model}\label{latent}
 The causal effect of random variables in a directed acyclic graph can be explained using \emph{structural equation models}, where each variable is modeled as a (nonlinear) function of its parents. The general form of these models is given by \citep{pearl2000caus}:
 \begin{equation}\label{eqnSEM}
   X_i = f_i(pa_i,Z_i), \hspace{0.5cm} i=1, \ldots, p
 \end{equation}
 The random variables $Z_i$ are the latent variables representing the unexplained variation in each node. To model the association among nodes of a \textsc{dag}, we consider a simplification of (\ref{eqnSEM}) with $f_i$ being linear. More specifically, let $\rho_{ij}$ represent the \emph{effect} of node $j$ on $i$ for $j \in pa_i$, then
 \begin{equation}\label{eqnSEMlin}
   X_i = \sum_{j \in pa_i}{\rho_{ij} X_j} + Z_{i}, \hspace{0.5cm} i=1, \ldots, p
 \end{equation}
 In the special case where the random variables are Gaussian, equations (\ref{eqnSEM}) and (\ref{eqnSEMlin}) are equivalent in the sense that $\rho_{ij}$ are coefficients of the linear regression model of $X_i$'s on $X_j, j \in pa_i$. It is known in the normal case that $\rho_{ij}=0, j \notin pa_i$.
 
 Consider the simple \textsc{dag} in the left panel of Figure \ref{ToyEx}; denoting the \emph{influence matrix} of the graph by $\Lambda$, (\ref{eqnSEMlin}) can be written in compact form as $X = \Lambda Z$, where for the simple example above, we have
  \[ \Lambda = \left( \begin{array}{ccc}
    1 & 0 & 0 \\
    \rho_{12} & 1 & 0 \\
    \rho_{12} \rho_{23} & \rho_{23} & 1
  \end{array} \right)\]
  \vspace{0.2cm}
 \noindent Let the latent variables $Z_i$ be independent with mean $\mu_i$ and variance $\sigma^2_i$. Then $\E(X) = \Lambda \mu$ and $\Sigma = \Var(X) = \Lambda D \T{\Lambda}$, where $D = \diag{(\sigma^2_i)}$ and $\T{\Lambda}$ denotes the transpose of the matrix $\Lambda$.

 The following result from \citet{shojaie2009NetBasedGSA} establishes relationships between the influence matrix $\Lambda$, and the adjacency matrix of the graph, $A$. The second part of the lemma establishes a compact relationship between $\Lambda$ and $A$ in the case of \textsc{dag}s, which is explored in section \ref{DAGest} to directly formulate the problem of estimating the skeleton of a \textsc{dag}.
 \begin{lemma}\label{lmLambdaA}
   For any graph $\mathcal{G} = (V, A)$,
  \begin{enumerate}
    \item[(i)]
        $\Lambda = A^0 + A^1 + A^2 + \cdots = \sum_{r=0}^{\infty}{A^r} $, where $A^0 \equiv I$.
    \item[(ii)]
        If $\mathcal{G}$ is a \textsc{dag}, $\Lambda$ has full rank and $\Lambda = (I - A)^{-1}$.
  \end{enumerate}
 \end{lemma}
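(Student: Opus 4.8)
The plan is to start from the linear structural equation model (\ref{eqnSEMlin}) written in matrix form. Since the adjacency matrix records the effect of node $j$ on node $i$ through $A_{ij}=\rho_{ij}$ (with $A_{ij}=0$ when $j\notin pa_i$), equation (\ref{eqnSEMlin}) reads compactly as $X = AX + Z$, while the influence matrix is by definition the operator sending the latent vector to the observed one, $X=\Lambda Z$. The whole lemma then reduces to solving this fixed-point relation for $X$ in terms of $Z$.

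For part (i), I would iterate the recursion $X = Z + AX$. Substituting the right-hand side into itself $k$ times gives
\[ X = \sum_{r=0}^{k} A^r Z + A^{k+1} X, \]
so that, collecting the coefficient of $Z$ and letting $k\to\infty$, one reads off $\Lambda = \sum_{r=0}^{\infty} A^r$ with $A^0=I$ accounting for each node's own latent term. To make this transparent I would supply the combinatorial reading of the matrix powers: by induction on $r$, using $(A^r)_{ij} = \sum_m A_{im}(A^{r-1})_{mj}$, the $(i,j)$ entry of $A^r$ equals the sum over all directed walks of length $r$ from $j$ to $i$ of the product of the edge effects along the walk. Hence $\bigl(\sum_r A^r\bigr)_{ij}$ is exactly the total effect transmitted from $j$ to $i$ along walks of every length, which is precisely the influence $\Lambda_{ij}$ of $j$ on $i$.

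For part (ii), I would exploit acyclicity. Because a \textsc{dag} admits a topological ordering of its nodes, relabelling by that ordering makes $A$ strictly triangular, hence nilpotent with $A^r=0$ for all $r\ge p$. This accomplishes two things at once: it shows the series in (i) terminates, $\Lambda = \sum_{r=0}^{p-1} A^r$, so the remainder $A^{k+1}X$ in the iteration vanishes and no convergence question arises; and it lets me verify the inverse by telescoping, $(I-A)\sum_{r=0}^{p-1} A^r = I - A^p = I$, so that $\Lambda = (I-A)^{-1}$. Full rank is then immediate, either from invertibility or, more directly, because in the topological ordering $\Lambda$ is triangular with unit diagonal and therefore has determinant $1$.

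The hard part, and the only genuinely delicate point, is the convergence and the meaning of $\Lambda$ in part (i) for an \emph{arbitrary} graph: once directed cycles are present, the series $\sum_r A^r$ and the total-influence interpretation require the spectral radius of $A$ to lie below one, and the fixed point $X = AX + Z$ need not be uniquely solvable. For \textsc{dag}s this difficulty disappears entirely thanks to nilpotency, which is why the clean identity $\Lambda=(I-A)^{-1}$ is reserved for part (ii); for part (i) I would therefore lean on the purely combinatorial walk-counting argument, which holds at the level of formal sums independently of any spectral considerations.
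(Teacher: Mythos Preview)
The paper does not actually prove this lemma: it is quoted from \citet{shojaie2009NetBasedGSA} and stated without argument, so there is no in-paper proof to compare against. Your proposal is correct and self-contained. The derivation of (ii) via nilpotency of $A$ under a topological ordering, the telescoping identity $(I-A)\sum_{r=0}^{p-1}A^r = I-A^p = I$, and the observation that $\Lambda$ is unit-triangular (hence $\det\Lambda=1$) are exactly the standard route and match how the paper uses the result downstream (e.g.\ the claim ``$\det(A)=1$'' in the derivation of (\ref{AOpt_DAG}) is really $\det(I-A)=1$, which is your unit-triangular fact). Your caveat about part (i) for graphs with cycles is well placed: as stated, (i) for a general graph only makes sense either formally or under a spectral-radius condition, and your walk-counting reading is the right way to give it content without extra hypotheses.
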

 \begin{remark}\label{remSigma}
   Part (ii) of Lemma \ref{lmLambdaA} and the fact that $\Sigma = \Lambda D \T{\Lambda}$ imply that for any \textsc{dag}, if $D_{ii} > 0$ for all $i$, then $\Sigma$ is full rank. More specifically, let $\phi_j(M)$ denote the $j$th eigenvalue of matrix $M$. Then, $\phi_{\min}(\Sigma) > 0$ (or $\phi_{\max}(\Sigma^{-1}) < \infty$). Similarly, since $\Sigma^{-1} = \T{\Lambda^{-}} D^{-1} \Lambda^{-1}$, full rankness of $\Lambda$ implies that $\phi_{\min}(\Sigma^{-1}) > 0$  (or equivalently $\phi_{\max}(\Sigma) < \infty$). This result also applies to all subnetworks of a \textsc{dag}.
 \end{remark}
 The properties of the proposed latent variable model established in Lemma \ref{lmLambdaA} are independent of the choice of probability distribution $\mathcal{P}$. In fact, since the latent variables $Z_i$ in (\ref{eqnSEMlin}) are assumed independent, given the entries of the adjacency matrix, the distribution of each random variable $X_i$ in the graph only depends on the values of $pa_i$. Therefore, regardless of the choice of the probability distribution, the joint distribution of the random variables is compatible with $\mathcal{G}$ (see for example \citet{pearl2000caus} p. 16). 
 In section \ref{PerfAnal}, we illustrate this result using data generated according to non-Gaussian distributions.

\section{Penalized Estimation of \textsc{dag}s}\label{DAGest}

\subsection{Problem Formulation}\label{problem}
  Consider the latent variable model of section \ref{latent} and denote by $\mathcal{X}$ the $n \times p$ data matrix. We assume, without loss of generality, that the $X_i$'s are centered and scaled, so that $\mu_i = 0$ and $\sigma^2_i = 1, i=1, \ldots, p$. Note that the results in section \ref{latent} were established independent of the choice of the probability distribution. As mentioned before, under the normality assumption, the latent variable model is equivalent to the general structural equation model. Although we focus on Gaussian random variables in the remainder of this paper, the estimation procedure proposed in this section can be applied to a variety of other distributions, if one is willing to assume the linear structure in (\ref{eqnSEMlin}).

  Denote by $\Omega\equiv\Sigma^{-1}$ the precision matrix of a $p$-vector of Gaussian random variables and consider a general penalty function by $J(\Omega)$. The penalized likelihood function is then given by
    \begin{equation}\label{eqnCovEst}
       \hat{\Omega} = \argmin_{\Omega \succ 0}{  \left\{
                                            - \logdet{(\Omega)} + \tr{(\Omega S)}+ \lambda J(\Omega)
                                          \right\}   }
    \end{equation}
  \noindent where $S = n^{-1}\T{\mathcal{X}} \mathcal{X}$ denotes the empirical covariance matrix and $\lambda$  is the tuning parameter controlling the size of the penalty. Applications in biological and social networks often involve sparse networks. It is therefore desirable to find a sparse solution for (\ref{eqnCovEst}). This becomes more important in the small $n$, large $p$ setting, where the unpenalized solution includes many additional edges. The lasso penalty of \citet{tibshirani1996rss} and the adaptive lasso penalty proposed by \citet{zou2006ala} are singular at zero and therefore result in sparse solutions. We consider these two penalties in order to find a sparse estimate of the adjacency matrix. Other choices of the penalty function are briefly discussed in the conclusions section.

  Using the latent variable model of section \ref{latent}, and the relationship between the covariance matrix and the adjacency matrix of \textsc{dag}s established in Lemma \ref{lmLambdaA}, the problem of estimating the adjacency matrix of the graph can be directly formulated as an optimization problem based on $A$. As noted in Lemma \ref{lmLambdaA}, if the underlying graph is a \textsc{dag} and the ordering of the variables is known, then $A$ is a lower triangular matrix with zeros on the diagonal. Let $\mathcal{A} = \{A: A_{ij}=0, \thickspace j \ge i\}$. Then using the facts that $\det(A) = 1$ and $\sigma^2_i = 1$,  $A$ can be estimated as the solution of the following optimization problem
     \begin{equation} \label{AOpt_DAG}
       \hat{A} = \argmin_{A \in \mathcal{A}} {  \left\{
                                    \tr{ \left[ \T{(I-A)} (I-A)S \right] } + \lambda J(A)
                                \right\}  }
     \end{equation}
  In this paper, we consider the general weighted lasso problem, where
  \begin{equation}\label{eqn_penalty}
    J(A) = \lambda \sum_{i,j=1:p, \smallskip j<i}{w_{ij} |A_{ij}|}
  \end{equation}
  Lasso and adaptive lasso problems are special cases of this general penalty. In the case of lasso, $w_{ij} = 1$. The original weights in adaptive lasso, proposed by \citet{zou2006ala} are obtained by setting $w_{ij} = |\tilde{A}_{ij}|^{-\gamma}$, for some initial estimate of the adjacency matrix $\tilde{A}$ and some power $\gamma$. To facilitate the study of asymptotic properties of adaptive lasso estimates we consider the following modification of the original weights
  \begin{equation}\label{eqn_weight}
    w_{ij} = 1 \vee |\tilde{A}_{ij}|^{-\gamma}
  \end{equation}
  \noindent where the original estimates $\tilde{A}$ are obtained from the regular lasso estimates.

  The objective function for both lasso and adaptive lasso problems is convex. However, since the $\ell_1$ penalty is non-differentiable, these problems can be reformulated using matrices $A_{+} = \max(A , 0)$ and $A_{-} = - \min(A , 0)$. To that end,
  let $W$ be the $p \times p$ matrix of weights for adaptive lasso, or the matrix of ones for the lasso estimation problem. Problem (\ref{AOpt_DAG}) can then be formulated as:
    \begin{equation}\label{AOpt_DAGpn2}
       \min_{A_{+} , A_{-} \succeq 0} \tr{ \left\{ \T{S(I - A_{+} + A_{-})} (I - A_{+} + A_{-}) + \lambda(A_{+} + A_{-})W + \Delta (A_{+} + A_{-}) \boldsymbol{1}_{l^{+}} \right\} }
    \end{equation}
  \noindent where $\succeq 0$ is interpreted componentwise, $\Delta$ is a large positive number and $\boldsymbol{1}_{l^{+}}$ is the indicator matrix for lower triangular elements of a $p \times p$ matrix, including the diagonal elements. The last term of the objective function (${\tr[\Delta (A_{+} + A_{-}) \boldsymbol{1}_{l^{+}}]}$) prevents the upper triangular elements of the matrices $A_{+}$ and $A_{-}$ to be nonzero.

  Problem (\ref{AOpt_DAGpn2}) is a quadratic optimization problem with non-negativity constraints and can be solved using standard interior point algorithms. However, such algorithms do not scale well with dimension and are only applicable if $p$ ranges in the hundreds. In section \ref{algDAG}, we present an alternative formulation of the problem, which leads to considerably more efficient algorithms.

\subsection{Optimization Algorithm}\label{algDAG}
  Consider again the problem of estimating the adjacency matrix of \textsc{dag}s with either lasso or adaptive lasso penalties. Denoting the $i$th \emph{row} of matrix $A$ as $A_i$ we can write (\ref{AOpt_DAG}) as:
     \begin{equation} \label{AOpt_DAG_sep1}
       \hat{A} = \argmin_{A \in \mathcal{A}} {  \left\{
                \sum_{i=1}^{p}{ \T{A_{i}} S A_{i} } - 2 A_{i} \T{S_{i}} + \lambda \T{W_i}|A_{i}|
                \right\}  }
     \end{equation}
  It can be seen that the objective function in (\ref{AOpt_DAG_sep1}) is separable and therefore it suffices to solve the optimization problem over each row of matrix $A$.
  Denote by $\underline{l}$ the set of indices up to $l$, i.e. $\underline{l} = j: 1 \le j \le l$.

  Then, taking advantage of the lower triangular structure of $A$, solving (\ref{AOpt_DAG_sep1}) is equivalent to solving the following $p-1$ optimization problems ($A_{11} = 0$)
     \begin{equation} \label{AOpt_DAG_sep2}
       \hat{A}_{i,\underline{i-1}} = \argmin_{ \theta \in \mathbb{R}^{i-1} } {  \left\{
                                    \T{\theta} S_{\underline{i-1},\underline{i-1}} \theta -
                                    2 S_{i,\underline{i-1}} \theta + \lambda \sum_{j=1}^{i-1}{|\theta_j| w_{ij}} \right\}  },
                                    \hspace{1cm} i=2, \ldots, p
     \end{equation}
  \noindent Using the facts that $S_{\underline{i-1},\underline{i-1}} = n^{-1} \T{(\mathcal{X}_{\underline{n},\underline{i-1}})} \mathcal{X}_{\underline{n},\underline{i-1}}$ and $S_{i,\underline{i-1}} = n^{-1} \T{(\mathcal{X}_{\underline{n},i})} \mathcal{X}_{\underline{n},\underline{i-1}}$, the problem in (\ref{AOpt_DAG_sep2}) can be reformulated as following $\ell_1$-regularized least squares problems
     \begin{equation} \label{AOpt_DAG_lasso}
       \hat{A}_{i,\underline{i-1}} = \argmin_{ \theta \in \mathbb{R}^{i-1} } { \left\{
              n^{-1}\| \mathcal{X}_{\underline{n},\underline{i-1}} \theta - \mathcal{X}_{\underline{n},i} \|_2^2 + \lambda_i \sum_{j=1}^{i-1}{|\theta_j| w_{ij}} \right\}  }, \hspace{1cm} i=2, \ldots, p
     \end{equation}
  \noindent The formulation in (\ref{AOpt_DAG_lasso}) indicates that the $i$th row of matrix $A$ includes the coefficient of projecting $X_i$ on $X_j, j = 1, \ldots, i-1$, which is in agreement with the discussion in section \ref{latent}. It also reveals a connection between covariance selection methods and the neighborhood selection approach of \citet{meinshausen2006hdg}; namely, when the underlying graph is a \textsc{dag}, the approximate solution of the neighborhood selection problem is exact, if the regression model is fitted on the set of parents of each node instead of all other nodes in the graph.
  \floatstyle{ruled}
  \newfloat{algorithm}{b}{loa}
  \floatname{algorithm}{Algorithm}
  \begin{algorithm}
    \caption[Algorithm]{Penalized Likelihood Estimation of \textsc{dag}s}
    \label{PLDAGestAlg}
    \begin{tabbing}
        1. \= Given the ordering $\mathcal{O}$, order the columns of observation matrix $\mathcal{X}$ in increasing order\\
        2. \= For $i = 2, 3, \ldots, p$\\
        \>2.1. \=
        Let $y = \mathcal{X}_{\underline{n},i}$, $X = \mathcal{X}_{\underline{n},\underline{i-1}}$ and $w = W_{i,\underline{i-1}}$ \\
        \>2.2. Given the weight matrix $W$, solve
        $\hat{A}_{i,\underline{i-1}} = \argmin { \left\{ n^{-1} \| X \theta - y \|_2^2 + \lambda_i \sum_{j=1}^{i-1}{|\theta_j| w_{j}} \right\}  }$
    \end{tabbing}
  \end{algorithm}

  Using (\ref{AOpt_DAG_lasso}), the problem of estimating \textsc{dag}s can be solved very efficiently. In fact, it suffices to solve $p-1$ lasso problems for estimation of least squares coefficients, with dimensions ranging from $1$ to $p-1$. To solve these problems, we use the efficient pathwise coordinate optimization algorithm of \citet{friedman2008rpg}, implemented in the R-package \texttt{glmnet}. The proposed algorithm is summarized in Algorithm \ref{PLDAGestAlg}.

\subsection{Analysis of Computational Complexity}\label{complexity}
  In this section, we provide a comparison of the computational complexity of the algorithm proposed in section \ref{algDAG} and the PC-Algorithm. As mentioned in the introduction, the space of all possible \textsc{dag}s is super-exponential in the number of nodes and hence it is not surprising that the PC-Algorithm, without any restriction on the space of \textsc{dag}s has exponential complexity. \citet{kalisch2007ehd} propose an efficient implementation of the PC-Algorithm for sparse \textsc{dag}s; its complexity where the maximal neighborhood size $q$ is small, is bounded with high probability by $O(p^{q})$. Although this is a considerable improvement over other methods of estimating \textsc{dag}s, in many applications it can become fairly expensive. For example, gene regulatory networks and signaling pathways exhibit a ``hub'' structure, which leads to large values for $q$. The graphical lasso algorithm, proposed by \citet{friedman2008sic}, uses an iterative algorithm for estimation of the inverse covariance matrix and has computational complexity $O(p^3)$.

  The reformulation of the \textsc{dag} estimation problem in (\ref{AOpt_DAG_lasso}) requires solving $p-1$ lasso regression problems. The cost of solving a lasso problem comprised of $k$ covariates and $n$ observations using the pathwise coordinate optimization (shooting) algorithm of \citet{friedman2008rpg} is $O(nk)$; hence, the total cost of estimating the adjacency matrix of the graph is $O(np^2)$, which is the same to the cost of calculating the (full) empirical covariance matrix $S_n$. Moreover, the formulation in (\ref{AOpt_DAG_lasso}) includes a set of non-overlapping sub-problems. Therefore, for problems with very large number of nodes and/or observations, the performance of the algorithm can be further improved by parallelizing the estimation of these sub-problems. The adaptive lasso version of the problem is similarly solved using the modification of the regular lasso problem proposed in \citet{zou2006ala}, which results in the same computational cost as the regular lasso problem.

  Figure \ref{figCPUtime}, compares the CPU time required for estimation of DAGs using both the PC-Algorithm, as well as our proposed algorithm for a range of values of $p$ and $n$. To control the complexity of the PC algorithm, the average neighborhood size is set to 5 and the significance level for the PC-Algorithm, as well as the tuning parameter for lasso and adaptive lasso penalties are set according to the optimal values discussed in Section \ref{PerfAnal}. The time reported for the PC-Algorithm is the CPU time required for estimation of the skeleton of the graph. The plot demonstrates the higher order of complexity of the PC-Algorithm, as well as the dependency of the algorithm on the sample size.
    \begin{figure}[t!]
        \centering
        \scalebox{0.85}
        {\includegraphics[clip=TRUE, trim=0cm 0cm 0cm 0cm ]{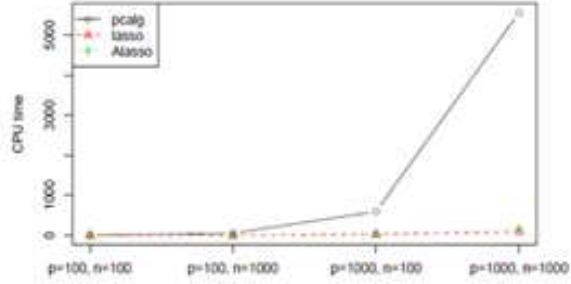}} 
        \caption{CPU time for analysis of simulated \textsc{dag}s with different number of nodes and sample size. The results are average times of $10$ repetitions on an Intel $2\cdot0$GH processor with $2\cdot0$GB of RAM.}
        \label{figCPUtime}
    \end{figure}

\section{Asymptotic Properties}\label{asymptotics}
\subsection{Preliminaries}
  We next establish theoretical properties for both lasso, as well as adaptive lasso estimators of the adjacency matrix of \textsc{dag}s. Asymptotic properties of lasso-type estimates with fixed design matrices have been studied by a number of researchers \citep[see e.g.][]{knight2000alt, zou2006ala, huang2008als}. Lasso estimates with random design matrices have been also considered by \citet{meinshausen2006hdg}. On the other hand, \citet{RothEtal:08} and \citet{lam2008src} among others have studied asymptotic properties of estimates of covariance and precision matrices.

  As discussed in section \ref{problem}, the problem of estimating the adjacency matrix of a \textsc{dag} is equivalent to solving $p-1$ non-overlapping penalized least square problems described in (\ref{AOpt_DAG_sep2}). In order to study the asymptotic properties of the proposed estimators, we focus on the asymptotic consistency of network estimation, i.e. the probability of correctly estimating the network structure, in terms of type I and type II errors. We allow the total number of nodes in the graph to grow as an arbitrary polynomial function of the sample size, while assuming that the true underlying network is sparse. The assumptions required for establishing asymptotic properties of lasso and adaptive lasso estimates of \textsc{dag}s are presented in section \ref{asymp_assump}. We then study variable selection consistency of both lasso and adaptive lasso estimates in section \ref{asymp_results}. The choice of penalty parameter is studied in section \ref{tuning}. Technical proofs are given in the Appendix.

\subsection{Assumptions}\label{asymp_assump}
  Let $X = (X_1, \ldots, X_p)$ be a collection of $p$ zero-mean Gaussian random variables with covariance matrix $\Sigma$. Denote by $\mathcal{X}$ the $n \times p$ matrix of observations, and by $S$ the empirical covariance matrix. To simplify the notation, denote by $\theta^{i} = A_{i,\underline{i-1}}$ the entries of the $i$th row of $A$ to the left of the diagonal. Further, let $\theta^{i,\mathcal{I}}$ be the estimate for the $i$th row, with values outside the set of indices $\mathcal{I}$ set to zero; i.e.,
  $
    \theta^{i,\mathcal{I}} \equiv A_{i,\underline{i-1}} \text{  and  } A_{i,j} = 0, \thickspace j \notin \mathcal{I}
  $. Consider the following assumptions:
  \begin{enumerate}
    \item[(A-0)]
        For some $a > 0$, $p = p(n) = O(n^a)$ as $n \rightarrow \infty$, and there exists a $0 \le b < 1$ such that $\max_{i \in V} \card{(pa_i)} = O(n^b)$ as $n \rightarrow \infty$.
    \item[(A-1)]
        There exists $\nu > 0$ such that for all $n \in \mathbb{N}$ and all $i \in V$, $\Var{\left( X_i \mid X_{\underline{i-1}} \right)} \ge \nu$.
    \item[(A-2)]
        There exists $\delta > 0$ and some $\xi > b$ (with $b$ defined above) such that for all $i \in V$ and for every $j \in pa_i$, $| \pi_{ij} | \ge \delta n^{-(1-\xi)/2}$, where $\pi_{ij}$ is the partial correlation between $X_i$ and $X_j$ after removing the effect of the remaining variables.
    \item[(A-3)]
        There exists $\Psi < \infty$ such that for all $n \in \mathbb{N}$ and every $i \in V$ and for every $j \in pa_i$, $\| \theta^{j,pa_i} \| \le \Psi$.
    \item[(A-4)]
        There exists $\kappa < 1$ such that for all $i \in V$ and for every $j \notin pa_i$, $\left| \sum_{k \in pa_i}{\sign(\theta^{i,pa_i}_k) \theta^{j,pa_i}_k } \right| < \kappa$.
  \end{enumerate}
  Assumption (A-3) limits the magnitude of the effects that each node in the network receives from its parents. This is less restrictive than the neighborhood selection criterion, where the effects over all neighboring nodes are assumed to be bounded. In fact, empirical data indicate that the average number of upstream-regulators per gene in  regulatory networks is less than 2 \citep{leclerc2008ssr}. Thus, the size of parents of each node is small, but each hub node can affect many downstream nodes.

  Assumption (A-4) is referred to as \emph{neighborhood stability} and is equivalent to the \emph{irrepresentability} assumption proposed by \citet{huang2008als}. It has been shown that the lasso estimates are not in general variable selection consistent if this assumption is violated. \citet{huang2008als} considered adaptive lasso estimates with general initial weights and showed their variable selection consistency under a weaker form of irrepresentability assumption, referred to as \emph{adaptive irrespresentability}. We will show that when the initial weights for adaptive lasso are derived from the regular lasso estimates (as in (\ref{eqn_weight})), the assumption of neighborhood stability, as well as the less stringent assumption (A-3) are not required for establishing variable selection consistency of adaptive lasso. This relaxation in assumptions required for variable selection consistency, is a result of the consistency of regular lasso estimates, as well as the special structure of \textsc{dag}s. However, the results of this section can be extended to adaptive lasso estimates of the precision matrix, as well as regression models with fixed and random design matrices, under additional mild assumptions.

\subsection{Asymptotic Consistency of \textsc{dag} Estimation}\label{asymp_results}

  Our first result studies the variable selection consistency of the lasso penalty.
  \begin{theorem}[Variable Selection Consistency of Lasso]\label{Thm_Lasso}
    Suppose that (A-1)-(A-4) hold and $\lambda \asymp d n^{-(1-\zeta)/2}$ for some $b < \zeta < \xi$ and $d > 0$. Then there exist constants $c_{(i)}, \ldots, c_{(iv)} > 0$ for the lasso estimation problem, such that for all $i \in V$, as $n \rightarrow \infty$
  \begin{enumerate}
    \item(i)]
        Estimation of the direction of influence: $$\prob\left\{ \sign{(\hat{\theta}^{i,pa_i}_j )} = \sign{(\theta^{i,pa_i}_j ) \text{for all} j \in pa_i } \right\} = 1 - O\left\{ \exp{(-c_{(i)} n^\zeta)} \right\}$$
    \item[(ii)]
        Control of type I error: $\prob\left( \hat{pa}_i \subseteq pa_i \right) = 1 - O\left\{\exp{(-c_{(ii)} n^\zeta)}\right\}$.
    \item[(iii)]
        Control of type II error: $\prob\left( pa_i \subseteq \hat{pa}_i \right) = 1 - O\left\{ \exp{(-c_{(iii)} n^\zeta)} \right\}$.
    \item[(iv)]
        Let $\hat{E}$ be the lasso estimate for the set of edges in the network. Then $$\prob(\hat{E} = E) = 1 - O\left\{ \exp{(-c_{(iv)} n^\zeta)} \right\}.$$
    \end{enumerate}
  \end{theorem}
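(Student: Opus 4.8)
The plan is to exploit the separability in (\ref{AOpt_DAG_lasso}): the estimation problem decouples into $p-1$ independent $\ell_1$-penalized regressions, one per node, so it suffices to establish sign consistency for the single regression of $X_i$ on its predecessors $X_{\underline{i-1}}$ and then take a union bound over $i \in V$. For fixed $i$, the natural ordering places every parent before its child, so the population regression coefficient vector is exactly $\theta^{i}$, supported on $pa_i \subseteq \underline{i-1}$, and the innovation $Z_i$ is independent of $X_{\underline{i-1}}$ with residual variance bounded below by (A-1). I would attack the per-node problem through its Karush--Kuhn--Tucker conditions together with a primal--dual witness construction: build the oracle solution $\tilde\theta$ by minimizing (\ref{AOpt_DAG_lasso}) subject to the off-support coordinates being zero, exhibit a subgradient $z$ so that $(\tilde\theta, z)$ satisfies the KKT system of the full unconstrained problem, and conclude that the lasso solution equals $\tilde\theta$ and carries the correct sign pattern.

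Two events then drive the conclusions. The first is strict dual feasibility off the support: for every $k \in \underline{i-1}\setminus pa_i$ one must show $\bigl| [S_{k,pa_i}(S_{pa_i,pa_i})^{-1}\sign(\theta^{i}_{pa_i})] + r_k \bigr| < 1$, where $r_k$ collects the noise contribution. The deterministic part is controlled by the neighborhood--stability constant $\kappa < 1$ of (A-4), and once the random perturbations are small the empirical quantity stays strictly below $1$; this yields the Type~I statement (ii). The second is correctness on the support: bounding $\|\hat\theta_{pa_i} - \theta^{i}_{pa_i}\|_{\infty}$ at the scale of $\lambda$ and comparing with the signal--strength bound (A-2), which forces $|\theta^{i}_j| \gtrsim n^{-(1-\xi)/2}$ for $j \in pa_i$, shows that no true coordinate is shrunk through zero, so $\sign(\hat\theta^{i}_j) = \sign(\theta^{i}_j)$; this gives (i) and the Type~II statement (iii). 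The requirement $\zeta < \xi$ is exactly what makes the penalization bias $O(\lambda) = O(n^{-(1-\zeta)/2})$ negligible against the signal $n^{-(1-\xi)/2}$.

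The probabilistic engine is Gaussian concentration for the quadratic and bilinear forms above. By Remark~\ref{remSigma}, $\phi_{\min}(\Sigma)$ and $\phi_{\max}(\Sigma)$ are bounded away from $0$ and $\infty$ uniformly over subnetworks, so standard tail bounds give entrywise deviations $|S_{jk} - \Sigma_{jk}|$ and inner products between the predictor columns and the sample innovations that are all $o(\lambda)$ except on an event of probability at most $\exp(-c\,n\lambda^{2}) = \exp(-c\,n^{\zeta})$, the exponent arising from $n\lambda^{2} \asymp n^{\zeta}$. A short perturbation argument transfers the lower bound on $\phi_{\min}(\Sigma_{pa_i,pa_i})$ to $(S_{pa_i,pa_i})^{-1}$ and the population irrepresentability to its empirical version. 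A union bound over the at most $O(n^{b})$ coordinates within each node and over the $p = O(n^{a})$ nodes costs only a polynomial factor, which $\exp(-c\,n^{\zeta})$ absorbs; intersecting the per-node exact-recovery events then gives $\prob(\hat E = E) = 1 - O\{\exp(-c_{(iv)}n^{\zeta})\}$, which is (iv).

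The step I expect to be the main obstacle is controlling $(S_{pa_i,pa_i})^{-1}$ and the empirical irrepresentability term uniformly while the support size $\card(pa_i)$ is allowed to grow like $n^{b}$: one must guarantee that the $\ell_\infty$ perturbation of an $n^{b}\times n^{b}$ random Gram matrix remains small enough to preserve the strict inequality $\kappa < 1$ and the on-support sign condition, all while holding the failure probability at $\exp(-c\,n^{\zeta})$. This is precisely where the hypothesis $b < \zeta$ is used: it leaves enough of the exponent $n^{\zeta}$ to dominate the dimension--dependent factors that appear when passing from spectral to $\ell_\infty$ bounds on the growing support.
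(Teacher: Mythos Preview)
Your proposal is sound, but it follows a different route from the paper. The paper gives no detailed argument for this theorem: it simply states that the proof follows \citet{meinshausen2006hdg} with minor modifications (replacing undirected conditional independence by $d$-separation). The Meinshausen--B\"uhlmann argument, visible in the paper's detailed proof of Theorem~\ref{Thm_ALasso}, works directly with the KKT conditions (Lemma~\ref{lemma_AlassoKKT}) and a contradiction device: for sign consistency it freezes a coordinate at a value $\beta$ of the wrong sign, decomposes the offending predictor as $X_j = \sum_{k \in pa_i\setminus\{j\}}\theta^{j,pa_i\setminus\{j\}}_k X_k + Z_j$, and then bounds the gradient $G_j$ using Lemma~A.2 of \citet{meinshausen2006hdg}. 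Assumption~(A-3) enters there to bound $\sum_{k \in pa_i\setminus\{j\}}|\theta^{j,pa_i}_k|$, and (A-4) handles the off-support indices; the empirical inverse $S_{pa_i,pa_i}^{-1}$ is never formed.

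Your approach is the primal--dual witness construction in the style of Wainwright, which explicitly builds the oracle estimator on $pa_i$, checks strict dual feasibility off the support via the empirical irrepresentability quantity $S_{k,pa_i}S_{pa_i,pa_i}^{-1}\sign(\theta^{i}_{pa_i})$, and bounds $\|\hat\theta_{pa_i}-\theta^{i}_{pa_i}\|_\infty$ for sign recovery. This is more modular and makes the role of each assumption transparent, but it does require the extra work you flag: controlling $(S_{pa_i,pa_i})^{-1}$ in $\ell_\infty\to\ell_\infty$ norm when $\card(pa_i)$ grows like $n^b$, and transferring the population bound $\kappa<1$ to the sample version. The Meinshausen--B\"uhlmann decomposition sidesteps that matrix inversion entirely, trading it for the projection-residual representation and assumption~(A-3), which is why the paper can defer the proof in one sentence.
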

  \begin{proof}
    The proof of this theorem follows from arguments similar to those presented in \citet{meinshausen2006hdg}, with minor modifications and replacing conditional independence in undirected graphs with d-separation in \textsc{dag}s.
  \end{proof}
  The next result establishes similar properties for adaptive lasso estimates, without the assumptions of neighborhood stability. The proof of Theorem \ref{Thm_ALasso} makes use of consistency of a class of sparse estimates of the Cholesky factors of covariance matrices, established in Theorem 9 of \citet{lam2008src}. For completeness, we restate a simplified version of the theorem for our lasso problem, for which $\sigma_i = 1, \smallskip i = 1, \ldots p$ and the eigenvalues of the covariance matrix are bounded (see Remark \ref{remSigma}). Throughout this section, we denote by $s$ the total number of nonzero element of the true adjacency matrix, $A$ of the \textsc{dag}.
  \begin{theorem}[\citet{lam2008src}]\label{Thm_LassoConsistency}
    If $n^{-1}(s+1)\log{p}=o(1)$ and $\lambda = O\left\{{(\log{p} /n)}^{1/2}\right\}$, then $\| \hat{A} - A \|_F = O_p\left\{{(n^{-1}s \log{p})}^{1/2}\right\}$.
  \end{theorem}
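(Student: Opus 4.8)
The plan is to bypass the citation and give a direct, self-contained argument based on the convexity of the $\ell_1$-penalized Gaussian loss together with a boundary (localization) argument. Because $\det(I-A)=1$ for strictly lower-triangular $A$, the $\logdet$ term is a constant and the objective reduces to the convex function $Q(A)=\tr[\T{(I-A)}(I-A)S]+\lambda J(A)$. Since $Q$ is convex and $\hat A$ is its minimizer, it suffices to produce a radius $r_n\asymp(n^{-1}s\log p)^{1/2}$ and a constant $M$ such that, with probability tending to one, $Q(A+\Delta)>Q(A)$ for every strictly lower-triangular $\Delta$ with $\|\Delta\|_F=Mr_n$; a standard convexity argument (any minimizer lying outside the ball would force, via the intersection of the segment $[A,\hat A]$ with the sphere, a point on the sphere with value $\le Q(A)$) then confines $\hat A$ to $\{\|A'-A\|_F\le Mr_n\}$, which is the claimed rate. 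Throughout, write $T_0=I-A$ and let $S_0=\mathrm{supp}(A)$, so $|S_0|=s$.

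First I would expand the quadratic loss exactly. Writing $A'=A+\Delta$,
\[
  Q(A+\Delta)-Q(A)=\tr[\T{\Delta}\Delta S]-2\,\tr[\T{\Delta}(I-A)S]+\lambda\{J(A+\Delta)-J(A)\}.
\]
The key observation is that the linear (cross) term is centered at the truth: using $\Sigma=(\T{T_0}T_0)^{-1}$ one has $T_0\Sigma=\T{(T_0^{-1})}$, which is upper-triangular with unit diagonal, whereas $\T{\Delta}$ is strictly upper-triangular, so $\tr[\T{\Delta}T_0\Sigma]=0$. Hence the cross term equals $-2\,\tr[\T{\Delta}T_0(S-\Sigma)]$, a mean-zero stochastic quantity, which I bound by $\|T_0(S-\Sigma)\|_\infty\,\|\Delta\|_1$. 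The matrix $T_0(S-\Sigma)=n^{-1}\T{\mathcal Z}\mathcal X-\E[Z\T{X}]$ is an entrywise average of centered products of the latent variables $Z_i=(T_0X)_i$ and the observations $X_j$, all of bounded variance (cf.\ Remark \ref{remSigma}); Gaussian product concentration together with a union bound over the $p^2$ entries gives $\|T_0(S-\Sigma)\|_\infty=O_p\{(\log p/n)^{1/2}\}$.

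Next I would supply curvature and dispose of the penalty. For the quadratic term, $\tr[\T{\Delta}\Delta S]=\tr[\T{\Delta}\Delta\Sigma]+\tr[\T{\Delta}\Delta(S-\Sigma)]\ge\phi_{\min}(\Sigma)\|\Delta\|_F^2-\|S-\Sigma\|_\infty\|\Delta\|_1^2$, where $\phi_{\min}(\Sigma)$ is bounded away from $0$ by Remark \ref{remSigma}. For the penalty, splitting $\Delta$ across $S_0$ and $S_0^c$ and using the triangle inequality yields $J(A+\Delta)-J(A)\ge\|\Delta_{S_0^c}\|_1-\|\Delta_{S_0}\|_1$. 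Choosing $\lambda\asymp(\log p/n)^{1/2}$ large enough to dominate the stochastic score from the previous step absorbs the cross term and forces $\Delta$ into the cone $\|\Delta_{S_0^c}\|_1\le3\|\Delta_{S_0}\|_1$, on which $\|\Delta\|_1\le4\sqrt{s}\,\|\Delta\|_F$. Collecting terms then gives, on this cone,
\[
  Q(A+\Delta)-Q(A)\ge\gamma\|\Delta\|_F^2-C\sqrt{s}\,(\log p/n)^{1/2}\|\Delta\|_F,
\]
with $\gamma=\phi_{\min}(\Sigma)-O_p\{s(\log p/n)^{1/2}\}$, and the right-hand side is strictly positive once $\|\Delta\|_F\gtrsim(s\log p/n)^{1/2}$, delivering the rate.

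The hard part will be the curvature control, i.e.\ showing that $\gamma$ stays bounded away from zero. In the regime $p\gg n$ the matrix $S$ is singular, so one cannot invoke $\phi_{\min}(S)$; positive curvature must instead be established on the cone, which amounts to preventing $\|S-\Sigma\|_\infty\|\Delta\|_1^2$ from swamping $\phi_{\min}(\Sigma)\|\Delta\|_F^2$. The crude max-norm bound above would require $s(\log p/n)^{1/2}\to0$, whereas the sharper analysis in \citet{lam2008src} replaces it by a restricted- (sparse-) eigenvalue bound that exploits the effective support of $\Delta$ on the cone and is exactly what permits the stated, weaker scaling $n^{-1}(s+1)\log p=o(1)$. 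Every high-probability statement in the argument ultimately reduces to the single concentration event $\{\|S-\Sigma\|_\infty=O((\log p/n)^{1/2})\}$, which follows from the Gaussianity of the $X_i$ and the bounded-eigenvalue property of $\Sigma$ established in Remark \ref{remSigma}.
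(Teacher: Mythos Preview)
The paper does not prove this statement at all: Theorem~\ref{Thm_LassoConsistency} is simply restated from \citet{lam2008src} (``For completeness, we restate a simplified version of the theorem for our lasso problem\ldots''), with no proof supplied. So there is nothing in the paper to compare your argument against; any self-contained proof you write is necessarily going beyond what the authors did.

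On its own merits, your sketch is essentially the Lam--Fan/Rothman-type localization argument, and the pieces you do work out are correct. In particular, the centering identity $\tr[\T{\Delta}T_0\Sigma]=0$ is right (since $T_0\Sigma=(\T{T_0})^{-1}$ is upper triangular with unit diagonal and $\T{\Delta}$ is strictly upper triangular), and the reduction of the cross term to $\|T_0(S-\Sigma)\|_\infty=\|n^{-1}\T{\mathcal Z}\mathcal X-\E Z\T{X}\|_\infty=O_p\{(\log p/n)^{1/2}\}$ is clean.

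The genuine gap is exactly the one you flag yourself, and it is not cosmetic. Your curvature bound $\tr[\T{\Delta}\Delta S]\ge\phi_{\min}(\Sigma)\|\Delta\|_F^2-\|S-\Sigma\|_\infty\|\Delta\|_1^2$ combined with the cone inequality $\|\Delta\|_1\le 4\sqrt{s}\|\Delta\|_F$ only yields $\gamma=\phi_{\min}(\Sigma)-O_p\{s(\log p/n)^{1/2}\}$, which stays bounded away from zero iff $s^2\log p/n\to 0$, strictly stronger than the stated hypothesis $n^{-1}(s+1)\log p=o(1)$. (Take $s\asymp n^{1/2}$, $\log p\asymp n^{1/4}$: the theorem's condition holds but your $\gamma\to-\infty$.) At that point you write ``the sharper analysis in \citet{lam2008src} replaces it by a restricted-eigenvalue bound\ldots,'' which is circular: your stated plan was to \emph{bypass the citation}, yet the only step that distinguishes the claimed rate from the weaker one you actually establish is handed back to the cited paper. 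If you want a self-contained proof at the stated scaling, you must supply that restricted/sparse-eigenvalue control of $\tr[\T{\Delta}\Delta S]$ on the cone explicitly; as written, your argument proves only the weaker version. A minor related point: the hypothesis is $\lambda=O\{(\log p/n)^{1/2}\}$, an upper bound, whereas your cone step needs $\lambda$ at least a constant multiple of $\|T_0(S-\Sigma)\|_\infty$; you are tacitly assuming $\lambda\asymp(\log p/n)^{1/2}$, which is what Lam--Fan in fact require.
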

  It can be seen from Theorem \ref{Thm_LassoConsistency} that lasso estimates are consistent as long as $n^{-1}(s+1)\log{p}=o(1)$. To take advantage of this result, we replace (A-0) with the following assumption
  \begin{enumerate}
    \item(A-$0^\prime$)]
        For some $a > 0$, $p = p(n) = O(n^a)$ as $n \rightarrow \infty$. Also, $\max_{i \in V} \card{(pa_i)} = O(n^b)$ as $n \rightarrow \infty$, where $s n^{2b-1}\log{n} = o(1)$ as $n \rightarrow \infty$.
  \end{enumerate}
  Assumption (A-$0^\prime$) further restricts the number of parents of each node and also enforces a restriction on the total number of nonzero elements of the adjacency matrix. Condition $s n^{2b-1} \log{n} = o(1)$, implies that $b < 1/2$. Therefore, although the consistency of adaptive lasso in Theorem \ref{Thm_ALasso} is established without making any further assumptions on the structure of the network (compared to Theorem \ref{Thm_Lasso}), it is achieved at the price of requiring higher degree of sparsity in the network.
  We now state the main result regarding variable selection consistency of adaptive lasso. Note that the theorem only requires assumptions (A-$0^\prime$), (A-1) and (A-2), and assumptions (A-3) and (A-4) are no longer required.
  \begin{theorem}[Variable Selection Consistency of Adaptive Lasso]\label{Thm_ALasso}
    Consider the adaptive lasso estimation problem, where the initial weights are calculated using regular lasso estimates of the adjacency matrix of the graph in (\ref{AOpt_DAG_sep2}). Suppose (A-$0^\prime$) and (A-1)-(A-2) hold and $\lambda \asymp d n^{-(1-\zeta)/2}$ for some $b < \zeta < \xi$ and $d > 0$. Also suppose that the initial lasso estimates are found using a penalty parameter $\lambda^0$ that satisfies $\lambda^0 = O\left\{{(\log{p}/n)}^{1/2}\right\}$. Then there exist constants $c_{(i)}, \ldots, c_{(iv)} > 0$ such that for all $i \in V$, as $n \rightarrow \infty$ (i)-(iv) in Theorem \ref{Thm_Lasso} hold.
  \end{theorem}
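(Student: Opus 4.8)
The plan is to exploit the separability noted in (\ref{AOpt_DAG_sep2})--(\ref{AOpt_DAG_lasso}) and prove the four claims one row at a time, deducing (iv) from (i)--(iii) by a union bound over the $p=O(n^a)$ rows. For a fixed $i$ the $i$th row is an ordinary adaptive-lasso regression of $X_i$ on $X_{\underline{i-1}}$ with weights $w_{ij}=1\vee|\tilde A_{ij}|^{-\gamma}$ built from the initial lasso fit (\ref{eqn_weight}). I would run a primal--dual witness argument: form the \emph{oracle} estimator $\check\theta$ that minimizes the same objective but is constrained to be supported on the true parent set $pa_i$, and then show (a) $\check\theta$ has the correct signs on $pa_i$, and (b) the off-support stationarity inequalities $|2n^{-1}\T{(\mathcal{X}_{\underline{n},j})}(\mathcal{X}_{\underline{n},i}-\mathcal{X}_{\underline{n},\underline{i-1}}\check\theta)|\le\lambda w_{ij}$ hold for every $j\notin pa_i$. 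Part (a) yields (i) and, together with (b), shows the oracle solution is the actual minimizer, giving (ii) and (iii). The whole novelty is that the data-driven weights must make (b) hold even though the neighborhood-stability condition (A-4) is \emph{not} assumed.

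The engine of the argument is a control of the weights through the consistency of the initial lasso. Applying Theorem \ref{Thm_LassoConsistency} row by row, so that the effective sparsity is the parent count $|pa_i|=O(n^b)$ rather than the global $s$, together with the bounded spectrum of every subnetwork covariance (Remark \ref{remSigma}) and (A-1), I would obtain the per-row bound $\|\tilde\theta^{i}-\theta^{i}\|_\infty\le\|\tilde\theta^{i}-\theta^{i}\|_2=O_p(n^{(b-1)/2}\sqrt{\log p})$. Two consequences drive everything. For a genuine parent $j\in pa_i$, (A-2) gives $|\theta^{i}_j|\gtrsim n^{-(1-\xi)/2}$, which dominates the estimation error precisely because $\xi>b$; hence $|\tilde A_{ij}|$ is bounded below of the same order and the parent weight is at most $w_{ij}=O_p(n^{\gamma(1-\xi)/2})$. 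For a non-parent $j\notin pa_i$ the truth is $0$, so $|\tilde A_{ij}|$ is at most the estimation error and the weight is at least of order $n^{\gamma(1-b)/2}$ up to logarithmic factors (being $+\infty$ whenever the initial lasso zeroes the coordinate). The inequality $\xi>b$ therefore forces the null weights to \emph{dominate} the signal weights by a polynomial factor in $n$, and this separation is exactly what replaces the irrepresentability hypothesis (A-4).

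With the weights controlled I would verify the two KKT parts. For (b), decompose the off-support gradient into a stochastic piece $n^{-1}\T{(\mathcal{X}_{\underline{n},j})}Z_i$ and a deterministic piece of the form $\tfrac{\lambda}{2}\,\Sigma_{j,pa_i}\Sigma_{pa_i,pa_i}^{-1}\diag(w_{i,pa_i})\sign(\check\theta_{pa_i})$. Bounding the matrix factor by the eigenvalue control of Remark \ref{remSigma} and $\diag(w_{i,pa_i})$ by the parent-weight bound $O_p(n^{\gamma(1-\xi)/2})$, the deterministic piece is $O_p(\lambda\,n^{\gamma(1-\xi)/2})$, which is $\ll\lambda w_{ij}$ (the latter being at least of order $\lambda\,n^{\gamma(1-b)/2}$) again because $\xi>b$; the stochastic piece is handled by Gaussian concentration. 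For (a), the oracle coordinate equals $\theta^{i}_j$ plus an OLS-type noise term of order $n^{-1/2}\sqrt{\log p}=o(n^{-(1-\xi)/2})$ and a penalty bias of order $\lambda\,n^{\gamma(1-\xi)/2}$; arranging the exponents so that $(1+\gamma)(1-\xi)<1-\zeta$ (which the admissible range $b<\zeta<\xi$ together with a suitable $\gamma$ permits) keeps the bias below the signal floor of (A-2), so every parent is retained with the correct sign.

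Finally I would assemble the probabilistic rates. All the deviation events are Gaussian quadratic-form or inner-product events thresholded at the penalty level $\lambda\asymp d\,n^{-(1-\zeta)/2}$, for which the large-deviation bound is $\exp(-c\,n\lambda^2)=\exp(-c\,n^{\zeta})$; the $p=O(n^a)$ union bounds over coordinates and rows cost only a factor $n^a$ and leave the rate at $1-O\{\exp(-c\,n^{\zeta})\}$. One point needs care: the weights are themselves random, so I would first condition on the high-probability event on which the initial lasso satisfies the per-row bound above and carry out the adaptive analysis on that event. The main obstacle is step (b) --- showing that the data-driven weights restore dual feasibility in the \emph{absence} of (A-4). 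Concretely, one must propagate the first-stage lasso randomness through the second-stage stationarity conditions while keeping the null-weight domination $n^{\gamma(1-b)/2}\gg n^{\gamma(1-\xi)/2}$ uniform over the polynomially many non-parents, and simultaneously reconcile the parent-side rate constraint $(1+\gamma)(1-\xi)<1-\zeta$ with this off-support requirement; balancing the exponents $b<\zeta<\xi$ and $\gamma$ is the delicate core of the proof.
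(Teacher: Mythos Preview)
Your primal--dual witness with weight separation is a reasonable template for adaptive lasso, but it is not how the paper proceeds, and the place where you depart from the paper is also where your sketch has a gap. For part (ii)---where (A-4) must be circumvented---the paper does \emph{not} compare parent weights against non-parent weights. Instead it observes that, because $w_{ij}=1\vee|\tilde A_{ij}|^{-\gamma}$, any non-parent $j$ with $\tilde A_{ij}=0$ has infinite weight and is automatically excluded; hence the adaptive-lasso type~I event is contained in the event that the \emph{initial} lasso selects $j$. The paper then proves directly that the initial lasso (with $\lambda^0=O\{(\log p/n)^{1/2}\}$) screens out all non-parents: it evaluates the gradient $G_j(\tilde\theta^{i,pa_i})$ at the $pa_i$-restricted initial-lasso solution, splits it into three pieces (one controlled by the Frobenius-norm consistency of Theorem~\ref{Thm_LassoConsistency} together with a Wishart bound, two by Bernstein's inequality), and shows each is below $\lambda^0$ with probability $1-O\{\exp(-cn^\zeta)\}$. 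This reduction uses (A-$0'$) through the requirement $sn^{2b-1}\log n=o(1)$ and is entirely $\gamma$-free. Part (i) is handled by the constrained-$\beta$ argument of \citet{meinshausen2006hdg} combined with Lemma~\ref{lemma_AlassoKKT}, and (iii)--(iv) follow from (i) and (ii) via the event $\mathcal{E}$ and Bonferroni.

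The gap in your sketch is the claim that ``bounding the matrix factor by the eigenvalue control of Remark~\ref{remSigma}'' makes the deterministic off-support piece $O_p(\lambda\,n^{\gamma(1-\xi)/2})$. The vector $\Sigma_{j,pa_i}\Sigma_{pa_i,pa_i}^{-1}$ is exactly $\theta^{j,pa_i}$, and without (A-3) the only bound Remark~\ref{remSigma} delivers is $\|\theta^{j,pa_i}\|_2=O(|pa_i|^{1/2})=O(n^{b/2})$; combined with $\|\diag(w_{i,pa_i})\sign(\cdot)\|_2=O(n^{b/2}n^{\gamma(1-\xi)/2})$ you acquire an extra $n^{b}$ factor. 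Forcing this below the non-parent threshold $\lambda w_{ij}\gtrsim\lambda n^{\gamma(1-b)/2}$ imposes a \emph{lower} bound $\gamma>2b/(\xi-b)$, while your sign-recovery step in (a) imposes the \emph{upper} bound $(1+\gamma)(1-\xi)<1-\zeta$, i.e.\ $\gamma<(\xi-\zeta)/(1-\xi)$. For triples with $\xi$ close to $b$ or $\zeta$ close to $\xi$ these intervals do not intersect, so ``a suitable $\gamma$'' need not exist. The paper's reduction to initial-lasso screening avoids this exponent balancing altogether.
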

  \begin{proof}
    A proof is given in the Appendix.
  \end{proof}

\subsection{Choice of the Tuning Parameter}\label{tuning}
  Both lasso, as well as adaptive lasso estimates of the adjacency matrix, depend on the choice of the tuning parameter $\lambda$. Different methods have been proposed for selecting the value of tuning parameter, including cross validation \citep{RothEtal:08, fan2007nev} and Bayesian Information Criteria (\textsc{bic}) \citep{yuan2007msa}. In both of these methods, the value of $\lambda$ is chosen to minimize the (penalized) likelihood function. However, choices of $\lambda$ that result in the optimal classification error do not guarantee a small error for the network reconstruction. We propose next a choice of $\lambda$ for \textsc{dag}s. Consider the following choice of the tuning parameter for the general weighted lasso problem with weights $w_{ij}$. Let $Z^*_q$ denote the $(1-q)$th quantile of standard normal distribution, and define
  \begin{equation}\label{eqnLambda}
    \lambda_i(\alpha) = 2 n^{-1/2} Z^*_{\frac{\alpha}{2p(i-1)}}
  \end{equation}
  The following theorem, states that such a choice controls the probability of falsely joining two distinct ancestral sets, defined next.
  \begin{definition}
    For every node $i \in V$, the ancestral set of node $i$, $AN_i$ consists of all nodes $j$ such that $j$ is an ancestor of $i$ or $i$ is an ancestor of $j$ or $i$ and $j$ have a common ancestor $k$.
  \end{definition}
  \begin{theorem}[Choice of the Tuning Parameter]\label{Thm_tuning}
    Under the assumptions of Theorems \ref{Thm_Lasso} and \ref{Thm_ALasso} above (for lasso and adaptive lasso, respectively), for all $n \in \mathbb{N}$ the solution of the general weighted lasso estimation problem with tuning parameter determined in (\ref{eqnLambda}) satisfies
    \[
        \prob( \exists i \in V: \hat{AN}_i \nsubseteq AN_i ) \le \alpha
    \]
  \end{theorem}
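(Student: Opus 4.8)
The plan is to follow the strategy of \citet{meinshausen2006hdg} for controlling false connections between distinct connectivity components, adapted to the directed/ancestral setting through the d-separation structure of the latent variable model. First I would reduce the global event $\{\exists i \in V: \hat{AN}_i \nsubseteq AN_i\}$ to a statement about individual rows. By the definition of the ancestral set, two nodes lie in the same ancestral set exactly when they belong to the same connectivity component of the graph; hence $\hat{AN}_i \subseteq AN_i$ holds for every $i$ as soon as no estimated edge crosses between two true components, i.e.\ as soon as no two distinct ancestral sets are falsely joined. Since the edge set is estimated row by row through the regressions (\ref{AOpt_DAG_lasso}), it therefore suffices to bound, uniformly over $i$ and over $j < i$ with $j \notin AN_i$, the probability that the $i$th regression assigns a nonzero coefficient to such a $j$, so that $\prob(\exists i: \hat{AN}_i \nsubseteq AN_i) \le \prob(\exists i: \hat{pa}_i \nsubseteq AN_i)$.

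The key analytic device is the restricted (oracle) solution. For fixed $i$, let $B_i = \{k < i : k \in AN_i\}$ and let $\hat{\theta}^{(B_i)}$ be the solution of (\ref{AOpt_DAG_lasso}) constrained to have support in $B_i$, with residual $R = \mathcal{X}_{\underline{n},i} - \mathcal{X}_{\underline{n},B_i}\hat{\theta}^{(B_i)}$. Padding $\hat{\theta}^{(B_i)}$ with zeros, the optimality (Karush--Kuhn--Tucker) conditions show that it also solves the full problem provided $2n^{-1}|\langle \mathcal{X}_{\underline{n},j}, R\rangle| \le \lambda_i w_{ij}$ for every $j \in \{1,\dots,i-1\}\setminus B_i$; because the weights in (\ref{eqn_weight}) satisfy $w_{ij}\ge 1$ for both lasso and adaptive lasso, it is enough to verify $2n^{-1}|\langle \mathcal{X}_{\underline{n},j}, R\rangle| \le \lambda_i$. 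On this event the restricted solution coincides with the full solution of (\ref{AOpt_DAG_lasso}), its support lies in $B_i$, and therefore $\hat{pa}_i \subseteq AN_i$ with no component-crossing edge created at row $i$.

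The crucial probabilistic input is that for $j \notin AN_i$ the variable $X_j$ shares no ancestor with $\{X_i\}\cup\{X_k : k \in B_i\}$; in the Gaussian structural equation model of Section \ref{latent} (Lemma \ref{lmLambdaA}) this means $X_j$ has disjoint latent support and is hence \emph{exactly} independent of the whole block of data used to form $R$. Consequently, conditional on the data in the component of $i$ (and thus on $R$), the statistic $V_j = n^{-1}\langle \mathcal{X}_{\underline{n},j}, R\rangle$ is exactly mean-zero Gaussian with variance $n^{-2}\Var(X_j)\|R\|_2^2$. Using the standardization of Section \ref{problem} together with the feasibility bound $n^{-1}\|R\|_2^2 \le n^{-1}\|\mathcal{X}_{\underline{n},i}\|_2^2$ (valid since $\theta = 0$ is feasible in (\ref{AOpt_DAG_lasso})), this conditional variance is at most $n^{-1}$, so that $\prob(2|V_j| > \lambda_i) \le 2\{1-\Phi(n^{1/2}\lambda_i/2)\}$; evaluated at $n^{1/2}\lambda_i/2 = Z^*_{\alpha/(2p(i-1))}$ via (\ref{eqnLambda}) this is at most $\alpha/\{p(i-1)\}$.

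Finally I would assemble the bound by two nested union bounds, whose sizes are exactly encoded in the quantile index $\alpha/(2p(i-1))$: the factor $2$ absorbs the two-sided tail of $V_j$, the factor $i-1$ the at most $i-1$ candidate indices $j$ within row $i$ (giving $\alpha/p$ per row), and the factor $p$ the union over the rows, yielding $\prob(\exists i: \hat{pa}_i \nsubseteq AN_i) \le \alpha$ and hence the claim for all $n$. The main obstacle is the third step: securing the exact conditional-Gaussian representation of $V_j$ and controlling its variance by $n^{-1}$ \emph{without} an estimated noise level, which is precisely what lets the stated tuning parameter be free of nuisance constants. The independence must be argued from the acyclic latent-variable structure (disjoint noise supports / d-separation) rather than from marginal conditional independence, and the restricted-solution argument relies on the (generic) uniqueness of the lasso solution so that verifying the inactive optimality inequalities genuinely pins down the full estimator.
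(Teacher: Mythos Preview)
Your proposal follows essentially the same route as the paper's proof: restrict the $i$th weighted lasso to indices in $AN_i$, invoke the KKT characterization (the paper's Lemma \ref{lemma_AlassoKKT}) together with $w_{ij}\ge 1$ to reduce the problem to bounding $|G_j(\hat{\theta}^{i,AN_i})|$ for $j\notin AN_i$, argue that $X_j$ is independent of the data forming the residual so that conditionally $G_j$ is centered Gaussian with variance at most $4/n$ (using $n^{-1}\|R\|_2^2\le n^{-1}\|\mathcal{X}_i\|_2^2=1$), and finish with the two nested union bounds whose sizes are exactly the factors $p$ and $i-1$ in the quantile index of (\ref{eqnLambda}).

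The only substantive difference is in how the independence step is justified. The paper computes $\Sigma_{ij}=\sum_{k\le\min(i,j)}\Lambda_{ik}\Lambda_{jk}$ directly from Lemma \ref{lmLambdaA} and argues that the summands vanish when $i$ and $j$ share no ancestor. You instead assert that $AN_i$ coincides with the weak connectivity component of $i$ and deduce disjoint latent supports from that. Be careful here: under the paper's stated definition of $AN_i$ (ancestor, descendant, or common ancestor), this identification is not literally true. For instance, with edges $1\to 3$, $2\to 3$, $2\to 4$ one has $3\in AN_4$ (common ancestor $2$) but $1\notin AN_4$, yet $X_1$ and $X_3$ are dependent; so the blanket claim that ``$X_j$ is independent of the whole block $\{X_k:k\in B_i\}$'' does not follow automatically from $j\notin AN_i$ and the component picture. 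This is the same delicate point the paper's proof passes over quickly; your argument is no worse than the paper's here, but the connectivity-component shortcut should be replaced by the explicit $\Lambda$-based calculation when you write it out.
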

  \begin{proof}
    A proof is given in the Appendix.
  \end{proof}
  Note that as in \citet{meinshausen2006hdg}, Theorem \ref{Thm_tuning} is true for all values of $p$ and $n$. However, the theorem does not provide any guarantee on false positive or false negative probabilities for individual edges in the graph. We also need to determine the optimal choice of penalty parameter $\lambda^0$ for the first phase of the adaptive lasso, where the weights are estimated using lasso. Since the goal of the first phase is to achieve prediction consistency, cross validation can be used to determine the optimal choice of $\lambda^0$. On the other hand, it is easy to see that the error-based proposal in (\ref{eqnLambda}) satisfies the requirement of Theorem \ref{Thm_LassoConsistency} and can therefore be used to define $\lambda^0$. It is however recommended to use a higher value of significance level in estimating the initial weights, in order to prevent an over-sparse solution.

\section{Performance Analysis}\label{PerfAnal}

\subsection{Preliminaries}\label{PerfAnal_prem}
  In this section, we consider examples of estimating \textsc{dag}s of varying number of edges from randomly generated data. To randomly generate data from \textsc{dag}s, one needs to generate lower-triangular adjacency matrices with sparse nonzero elements, $\rho_{ij}$. We use the random \textsc{dag} generator in the R-package \texttt{pcalg} (\cite{kalisch2007ehd}) which also controls the neighborhood size. The sparsity levels in \textsc{dag}s with different sizes are set according to the theoretical bounds in section \ref{asymptotics}, as well as the recommendations of \citep{kalisch2007ehd}, for neighborhood size. More specifically, in simulations throughout this section, we use a maximum neighborhood size of 5, while limiting the total number of true edges to be equal to the sample size $n$.

  Different measures of structural difference can be used to evaluate the performance of estimators. The Structural Hamming Distance (\textsc{shd}) between the structures of the estimated and true \textsc{dag}s, represents the number of edges that are not in common between the two graphs and is equal to the sum of false positive and false negative edges in the estimated graph. The main drawback of this measure is its dependency on the number of nodes, as well as the sparsity of the network. The second measure of goodness of estimation considered here is the Matthews Correlation Coefficient (\textsc{mcc}). \textsc{mcc} is commonly used to assess the performance of binary classification methods and is defined as
  \begin{equation}
    \textsc{mcc} = \frac{(\textsc{tp} \times \textsc{tn}) - (\textsc{fp} \times \textsc{fn})}{{(\textsc{tp} + \textsc{fp})(\textsc{tp} + \textsc{fn})(\textsc{tn} + \textsc{fp})(\textsc{tn} + \textsc{fn})}^{1/2} }
  \end{equation}
  \noindent where \textsc{tp}, \textsc{tn}, \textsc{fp} and \textsc{fn} correspond to true positive, true negative, false positive and false negative, respectively. The value of \textsc{mcc} ranges from $-1$ to $1$ with larger values corresponding to better fits ($-1$ and $1$ represent worst and best fits, respectively). Finally, in order to compare the performance of different estimation methods with theoretical bounds established in section \ref{asymp_results}, we also report the values of false positive rates.

  The performance of both the PC-Algorithm as well as our proposed estimators based on the choice of tuning parameter in (\ref{eqnLambda}) vary with different values of significance level $\alpha$. In the following experiments, we first investigate the appropriate choice of $\alpha$ for each estimator. We then compare the performance of the estimators with an optimal choice of lambda using both numerical measures of performance, as well as gray-scale images of the estimated \textsc{dag}s with the true structure. Gray-scale images are obtained by calculating the proportion of times that a specific edge is present in the simulations (i.e. $\hat{A}_{ij} \ne 0$). To offset the effect of numerical instability, we consider an edge present, if $|\hat{A}_{ij}| > 10^{-4}$.

\subsection{Estimation of \textsc{dag}s from Normally Distributed Observations}\label{SimNormal}

  We begin with an example that illustrates the differences between estimation of \textsc{dag}s and the conditional independence networks (\textsc{cig}). The first two images in Figure \ref{figSimDAGplot} represent a randomly generated \textsc{dag} of size 50 along with the gray-scale image of the average precision matrix estimated based on 100 observations using the graphical lasso algorithm (implemented in the R package \texttt{glasso}). To control the probability of falsely connecting two components of the graph, the value of the tuning parameter for \texttt{glasso} is defined based on the error-based proposal of \citet{banerjee2008mst} (Theorem 2). It can be seen that the \textsc{cig} has many more edges ($8\%$ false positives compared to $1\%$ for lasso and adaptive lasso), and does not reveal the true structure of the underlying \textsc{dag}. It can be seen that although methods of estimating \textsc{cig}s are computationally efficient, they should not be used in applications like estimation of gene regulatory networks, where the underlying graph is directed.
    \begin{figure}[h!]
        \centering
        \scalebox{0.5}
        {\includegraphics[clip=TRUE, trim=0cm 0cm 0cm 0cm]{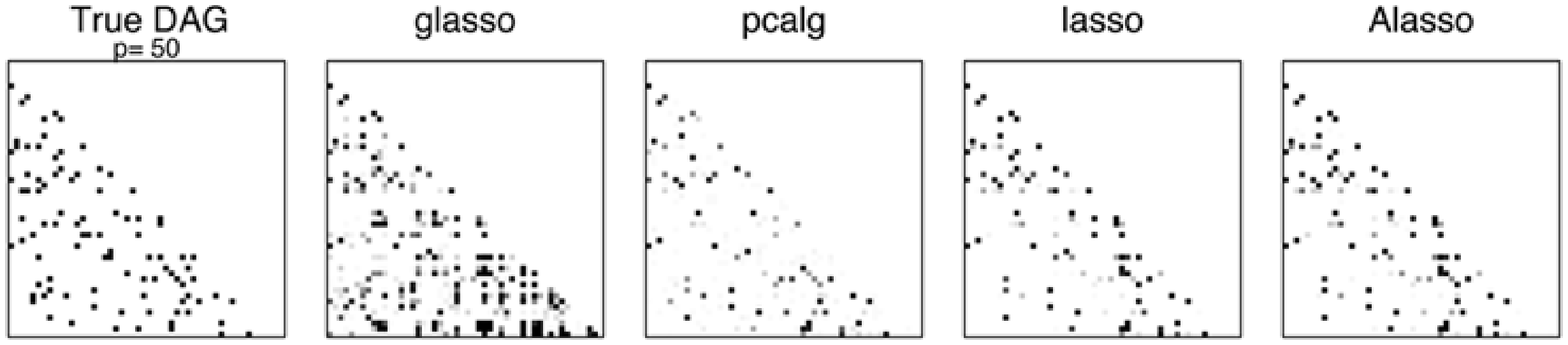}} 
        \caption{True \textsc{dag} along with estimates from Gaussian observations using \texttt{glasso}, \texttt{pcalg}, \texttt{lasso} and \texttt{Alasso}. Gray scale represents the percentage of inclusion of edges.} \label{figSimDAGplot}
    \end{figure}
  In simulations throughout this section, the sample size is fixed at $n=100$, and estimators are evaluated for an increasing number of nodes ($p=50,100,200$). Figure \ref{figSimHamming_normal} shows the mean and standard deviation of Hamming distances for estimates based on the PC-Algorithm (\texttt{pcalg}), as well as the proposed lasso (\texttt{lasso}) and adaptive lasso (\texttt{Alasso}) methods for different values of the tuning parameter $\alpha$ and different network sizes. For all values of $p$ and $\alpha$, the adaptive lasso estimate gives the best results, and the proposed penalized likelihood methods outperform the PC-Algorithm. This difference becomes more significant as the size of the network increases.

  As mentioned in section \ref{DAGproblem}, it is not always possible to estimate the direction of the edges of a \textsc{dag} and therefore, the estimate from the PC-Algorithm may include undirected edges. Since our penalized likelihood methods assume knowledge of the ordering of variables and estimate the structure of the network, in the simulations considered here, we only estimate the skeleton (structure) of the network using the PC-Algorithm. We then then use the ordering of the variables to determine the direction of the edges. The performance of the the PC-Algorithm for estimation of the partially completed \textsc{dag} (\textsc{pcdag}) may therefore be worse than the results reported here.

  In the simulation results reported here, observations are generated according to the linear structural equation model (\ref{eqnSEMlin}) with standard normal latent variables and $\rho_{ij} = \rho = 0 \cdot 8$. Additional simulation studies with different values of $\sigma$ and $\rho$ indicate that changes in $\sigma$ do not have a considerable affect on the performance of the proposed models. On the other hand, as the magnitude of $\rho$ decreases, the performance of the proposed methods (as well as the \texttt{pcalg}) algorithm deteriorates, but the findings of the above comparison remain unchanged.

  The above simulation results suggest that the optimal performance of the PC-Algorithm is achieved when $\alpha = 0.01$. The performance of lasso and adaptive lasso methods is less sensitive to the choice of $\alpha$; however, $\alpha = 0.10$ seems to deliver more reliable estimates. Similar results were also observed in simulations with other choices of $\sigma$ and $\rho$. In addition, our extended simulations indicate that the performance of adaptive lasso does not vary significantly with the choice of power ($\gamma$), and therefore we only present the results for $\gamma = 1$.
    \begin{figure}[t!]
        \centering
        \scalebox{0.6}
        {\includegraphics[clip=TRUE, trim=0cm 0cm 0cm 0cm]{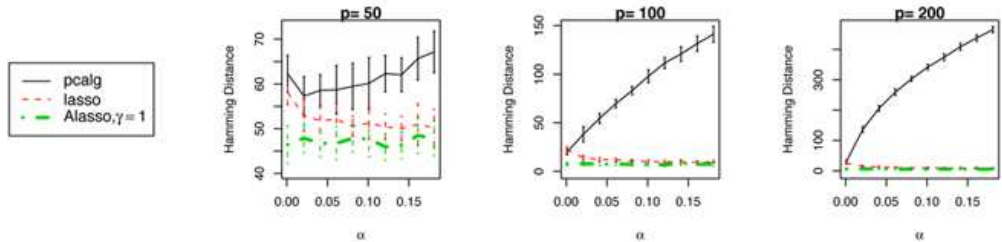}}
        \caption{Hamming Distance for estimation of \textsc{dag} using \texttt{pcalg}, \texttt{lasso} and \texttt{Alasso} from normal observations.}
        \label{figSimHamming_normal}
    \end{figure}
  Figure \ref{figSimDAGplot} represents images of estimated and true \textsc{dag}s created based on the above considerations for tuning parameters for $p=50$. Similar results are observed for $p=100, p=200$, and are excluded to conserve space. Plots in figure \ref{figSimAll_normal} compare the performance of the three methods with the optimal settings of tuning parameters, over a range of values of $p$. It can be seen that the values of \textsc{mcc} confirm the above findings based on \textsc{shd}. However, false positive and true positive rates only focus on one aspect of estimation at a time and do not provide a clear distinction between the methods. These plots also suggest that estimates from the \texttt{pcalg} may vary significantly. This variation is represented more significantly in terms of false positive rates, where standard deviations for estimates based are up to 10 times larger than those of \texttt{lasso} and \texttt{Alasso} estimates ($\sim 40\%$ for \texttt{pcalg} compared to $< 4\%$ for \texttt{lasso} and \texttt{Alasso}).
    \begin{figure}[t!]
        \centering
        \scalebox{0.57}
        {\includegraphics[clip=TRUE, trim=0cm 0cm 0cm 0cm]{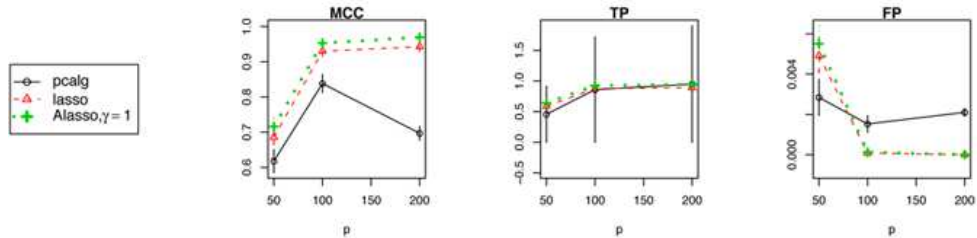}}
        \caption{\textsc{mcc}, \textsc{fp} and \textsc{tp} for estimation of \textsc{dag} using \texttt{pcalg}, \texttt{lasso} and \texttt{Alasso} from normal observations.}
        \label{figSimAll_normal}
    \end{figure}

  As mentioned in section \ref{DAGproblem}, the representation of conditional independence in \textsc{dag}s adapted in our proposed algorithm, is not restricted to normally distributed random variables. Moreover, if the underlying structural equations are linear, the method proposed in this paper can correctly estimate the underlying \textsc{dag}. In order to assess the sensitivity of the estimates to the underlying distribution, we performed two simulation studies with non-Normal observations. In both simulations, observations were generated according to a linear structural model. However, in the first simulation, each latent variable was generated from a mixture of a standard normal and a t-distribution with 3 degrees of freedom, while in the second simulation, a t-distribution with 4 degrees of freedom was used. The performance of the proposed algorithm for non-normal observations was similar to the case of Gaussian observations, with \texttt{Alasso} providing the best estimates, and the performance of penalized methods improving as the dimension and sparsity increase.

\subsection{Sensitivity to Perturbations in the Ordering of the Variables}\label{SimPermute}

  Algorithm \ref{algDAG} assumes a known ordering of the variables. The superior performance of the proposed penalized likelihood methods in comparison to the PC-Algorithm may be explained by the fact that additional information about the order of the variables significantly simplifies the problem of estimating \textsc{dag}s. Therefore, when such additional information is available, estimates using the PC-Algorithm suffer from a natural disadvantage. However, as the underlying network becomes more sparse, the network includes fewer complex structures and it is expected that the ordering of variables should play a less significant role.

  Next, we study the performance of the proposed penalized likelihood methods as well as the PC-Algorithm in problems where the ordering of variables is unknown. To this end, we generate normally distributed observations from the latent variable model of section \ref{latent}. We then randomly permute the order of variables in the observation matrix and use the permuted matrix to estimate the original \textsc{dag}. Figure \ref{figSimAll_permute} illustrates the performance of the three methods for choices of $\alpha$ described in section \ref{SimNormal}. It can be seen that for small, dense networks, the PC-Algorithm outperforms the proposed methods. This is expected since the change in the order of variables causes the algorithm to include unnecessary moral edges, while failing to recognize some of the existing associations. On the other hand, as the size of the network and correspondingly the degree of sparsity in the network increase, the local structures become simpler and therefore the ordering of the variables becomes less crucial. Thus, the performance of penalized likelihood algorithms is improved compared to that of the PC-Algorithm. For the high dimensional sparse case, where the computational cost of the PC-Algorithm becomes more significant, both lasso and adaptive lasso methods outperform the PC-Algorithm.
    \begin{figure}[h!]
        \centering
        \scalebox{0.57}
        {\includegraphics[clip=TRUE, trim=0cm 0cm 0cm 0cm]{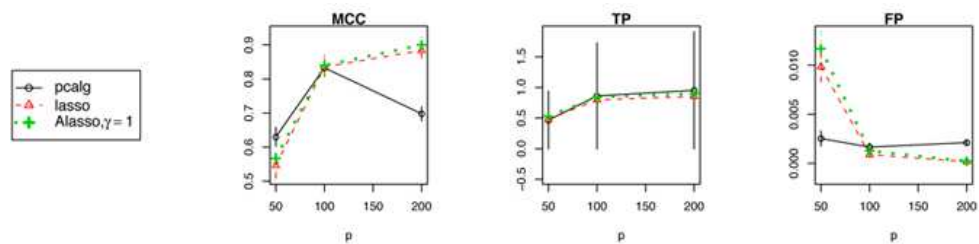}}
        \caption{\textsc{mcc}, \textsc{fp} and \textsc{tp} for estimation of \textsc{dag} using \texttt{pcalg}, \texttt{lasso} and \texttt{Alasso} with random ordering.}
        \label{figSimAll_permute}
    \end{figure}

\section{Real Data Application}\label{RealData}

\subsection{Analysis of Cell Signalling Pathway Data}\label{ex1}
  \citet{sachs2003cps} carried out a set of flow cytometry experiments on signaling networks of human immune system cells. The ordering of the connections between pathway components were established based on perturbations in cells using molecular interventions and we consider the ordering to be known a priori. The data set includes $p=11$ proteins and $n=7466$ samples.

  \citet{friedman2008sic} analyzed this data set using the \texttt{glasso} algorithm. They estimated the graph for a range of values of the $\ell_1$ penalty and reported moderate agreement (around $50\%$ false positive and false negative rates) between one of the estimates and the findings of \citet{sachs2003cps}. True and estimated signaling networks using the PC-Algorithm, as well as both lasso and adaptive lasso algorithms, along with performance measures are given in Figure \ref{figCellSignal}. The estimated network using the \texttt{pcalg} includes a number of undirected edges. As in the simulation studies, we only estimate the structure of the network using the \texttt{pcalg} and determine the direction of edges by enforcing the ordering of nodes in the \textsc{dag}. It can be seen that the adaptive lasso and lasso provides estimates that are closest to the true structure.
    \begin{figure}[t!]
        \centering
        \scalebox{0.75}
        {\includegraphics[clip=TRUE, trim=0cm 0cm 0cm 0cm]{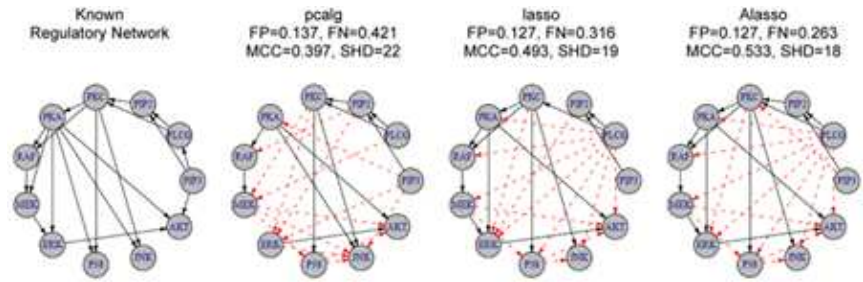}} 
        \caption{Known and estimated networks for human cell signalling data. True edges (True Positives in estimated networks) are marked with solid blue arrows, while False Positives are indicated by dashed red arrows.} \label{figCellSignal}
    \end{figure}

\subsection{Transcription Regulatory Network of E-coli}\label{ex2}
  Transcriptional regulatory networks play an important role in controlling the gene expression in cells and incorporating the underlying regulatory network results in more efficient estimation and inference \citep{shojaie2009NetBasedGSA, shojaie2009NetEnrich}. \citet{kao2004tbd} proposed to use Network Component Analysis to infer transcriptional regulatory network of \emph{Escherichia coli} (E-coli). They also provide whole genome expression data over time ($n=24$), as well as information about the known regulatory network of E-coli.

  In this application, the set of transcription factors (TFs) are known \emph{a priori} and the goal is to find connections among transcription factors and regulated genes through analysis of whole genome transcriptomic data. Therefore, the algorithm proposed in this paper can be used by exploiting the natural hierarchy of TFs and genes. \citet{kao2004tbd} provide gene expression data for 7 transcription factors and 40 genes regulated by these TFs. Figure \ref{figEcoli} presents the known regulatory network of E-coli along with the networks estimated using three different methods as well as different measures of performance. The relatively poor performance of the algorithms in this example can be partially attributed to the small sample size. However, it is also known that no single source of transcriptomic data is expected to successfully reveal the regulatory networks and methods that combine different sources of data are considered to be more efficient. It can be seen that the PC-Algorithm can only detect one of the true regulatory connections, while the proposed algorithm, with both lasso, as well as adaptive lasso penalties, offers significant improvements over the results of the PC-Algorithm, mostly due to the significant drop in proportion of false negatives (from $97\%$ for the PC-Algorithm to $63\%$ for adaptive lasso). Although the estimates based on lasso and adaptive lasso penalties are very similar, the choice of the best estimate depends on the performance evaluation metric.
    \begin{figure}[t!]
        \centering
        \scalebox{0.75}
        {\includegraphics[clip=TRUE, trim=0cm 0cm 0cm 0cm ]{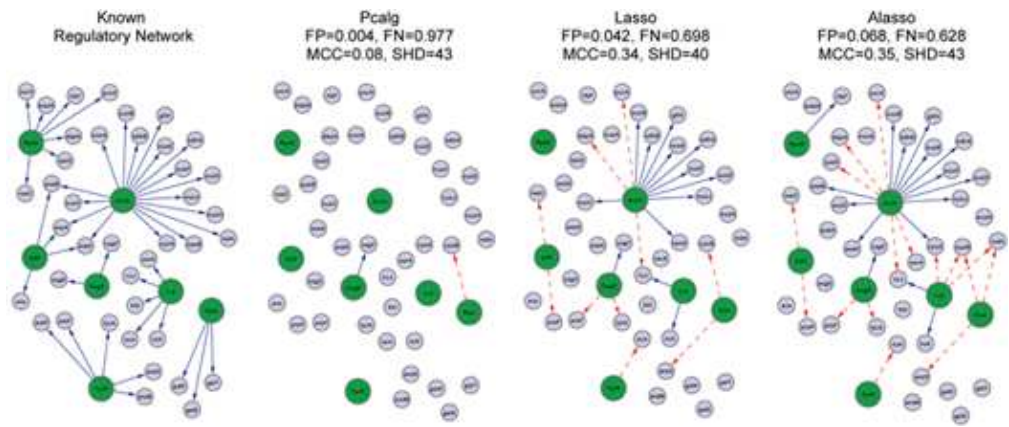}} 
        \caption{Known and estimated transcription regulatory network of E-coli. Large nodes indicate the transcription factors (\textsc{TF}s) and smaller nodes refer to their regulated genes (ORFs). True edges (True Positives in estimated networks) are marked with solid blue arrows, while False Positives are indicated by dashed red arrows.} \label{figEcoli}
    \end{figure}

\section{Conclusion}\label{conclusion}
  We proposed efficient penalized likelihood methods for estimation of the structure of \textsc{dag}s when variables inherit a natural ordering. Both lasso and adaptive lasso penalties were considered in this paper. However, the proposed algorithm can also be used for estimation of adjacency matrix of \textsc{dag}s under other choices of penalty, as long as the penalty function is applied to individual elements of the adjacency matrix. 

  There are a number of biological applications where the ordering of the variables is known \emph{a priori}. Estimation of transcriptional regulatory networks from gene expression data and reconstruction of causal networks from temporal observations, based on the concept of Granger causality \citep{granger1969icr} are areas of potential application for the proposed algorithm. Our simulation studies also indicate that the correct ordering of variables is less crucial for estimation high dimensional sparse \textsc{dag}s. Thus, even in high dimensional sparse applications where the ordering among variables is not known, the methods proposed in this paper may be an efficient alternative for search based methods of estimating \textsc{dag}s.


\section*{Appendix: Technical Proofs}
  Recall from section \ref{asymp_assump} that $\theta^{i} = A_{i,\underline{i-1}}$ denotes the entries of the $i$th row of the adjacency matrix to the left of the diagonal. Also, denote by $\theta^{i,\mathcal{I}}$ the estimate for the $i$th row with values outside the set of indices $\mathcal{I}$ set to zero and let $\theta^{i,\mathcal{I}}_j$ be the $j$ component of $\theta^{i,\mathcal{I}}$.

  The following lemma is a consequence of the Karush–Kuhn–Tucker conditions for the general weighted lasso problem and is used in the proof of Theorems \ref{Thm_ALasso} and \ref{Thm_tuning}.
  \begin{lemma}\label{lemma_AlassoKKT}
    Let $\hat{\theta}^{i,\mathcal{I}}$ be the general weighted lasso estimate of $\theta^{i,\mathcal{I}}$, i.e.
        \begin{equation}\label{eqnThetaHat}
            \hat{\theta}^{i,\mathcal{I}} = \argmin_{\theta:\theta_k = 0 \forall k \notin \mathcal{I}}  { \left\{
                                                 n^{-1} \| \mathcal{X}_i - \mathcal{X} \theta \|_2^2 + \lambda \sum_{k=1}^{p}{|\theta_k| w_{ik}}
                                            \right\} }
        \end{equation}
    \noindent Let
        \begin{equation*}
            G_j(\theta) = -2n^{-1} \T{\mathcal{X}}_j (\mathcal{X}_i - \mathcal{X} \theta)
        \end{equation*}
    \noindent and $w_i$ be the vector of initial weights in adaptive lasso estimation problem. Then a vector $\hat{\theta}$ with $\hat{\theta}_k = 0, \forall k \notin \mathcal{I}$ is a solution of (\ref{eqnThetaHat}) iff $\forall j \in \mathcal{I}, G_j(\theta) = -\sign{(\hat{\theta}_j)} w_{ij} \lambda$ if $\hat{\theta}_j \neq 0$ and $| G_j(\theta) | \le w_{ij} \lambda$ if $\hat{\theta}_j = 0$. Moreover, if the solution is not unique and $| G_j(\theta) | < w_{ij} \lambda$ for some solution $\hat{\theta}$, then $\hat{\theta}_j = 0$ for all solutions of (\ref{eqnThetaHat}).
  \end{lemma}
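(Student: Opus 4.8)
The plan is to recognize that (\ref{eqnThetaHat}) is an unconstrained convex minimization over the coordinates indexed by $\mathcal{I}$ (the coordinates outside $\mathcal{I}$ being pinned to zero), so that optimality is completely characterized by the requirement that the zero vector lie in the subdifferential of the objective at $\hat{\theta}$. The objective splits as $g(\theta) + \lambda \sum_k w_{ik}|\theta_k|$, where the loss $g(\theta) = n^{-1}\|\mathcal{X}_i - \mathcal{X}\theta\|_2^2$ is smooth and the penalty is convex but non-differentiable on the coordinate axes. First I would differentiate the smooth part, obtaining for its $j$th partial derivative $\partial_j g(\theta) = -2n^{-1}\T{\mathcal{X}}_j(\mathcal{X}_i - \mathcal{X}\theta) = G_j(\theta)$, so that $G_j$ is exactly the $j$th coordinate of the gradient of the loss.

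Next I would apply subgradient calculus. Since $g$ is differentiable, the subdifferential of the full objective in coordinate $j \in \mathcal{I}$ is $G_j(\theta) + \lambda w_{ij}\,\partial|\theta_j|$, where $\partial|\theta_j| = \{\sign(\theta_j)\}$ if $\theta_j \neq 0$ and $\partial|\theta_j| = [-1,1]$ if $\theta_j = 0$. Requiring $0$ to lie in this set and reading off the two regimes yields, for every $j \in \mathcal{I}$, the stationarity equation $G_j(\hat{\theta}) = -\sign(\hat{\theta}_j)\,w_{ij}\lambda$ when $\hat{\theta}_j \neq 0$, and the inequality $|G_j(\hat{\theta})| \le w_{ij}\lambda$ when $\hat{\theta}_j = 0$. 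Because the objective is convex, this subgradient inclusion is both necessary and sufficient for optimality, which delivers the ``iff'' in the statement.

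For the uniqueness clause the crucial observation is that the fitted vector $\mathcal{X}\hat{\theta}$, and hence every value $G_j(\hat{\theta})$, is the same at all minimizers. I would establish this by the standard strict-convexity argument: if $\theta^{(1)}$ and $\theta^{(2)}$ are two minimizers, their midpoint is also a minimizer (the solution set is convex), so the convexity bound for $g$ must hold with equality; since $g$ is strictly convex as a function of $\mathcal{X}\theta$ (because $\|\cdot\|_2^2$ is strictly convex), this forces $\mathcal{X}\theta^{(1)} = \mathcal{X}\theta^{(2)}$. Consequently the residual $\mathcal{X}_i - \mathcal{X}\hat{\theta}$, and therefore $G_j(\hat{\theta})$, is invariant across the solution set. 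If now $|G_j(\hat{\theta})| < w_{ij}\lambda$ for one solution, the identical strict inequality holds at every solution, and the stationarity equation rules out $\hat{\theta}_j \neq 0$ (which would demand equality $|G_j| = w_{ij}\lambda$); hence $\hat{\theta}_j = 0$ at all solutions.

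The argument is almost entirely routine convex subgradient calculus, and I would not expect any single step to pose real difficulty. The only place requiring a little care is the invariance of the fitted values, and hence of $G_j$, across the whole solution set, which is what makes the ``Moreover'' clause go through; the hard part, to the extent there is one, is simply spelling out that strict-convexity observation cleanly, after which the conclusion is immediate.
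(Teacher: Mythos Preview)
Your proposal is correct and is exactly the standard subdifferential/KKT argument that underlies this result. The paper does not give its own proof but simply refers to Lemma~A.1 of \citet{meinshausen2006hdg} (with the trivial modification of carrying the weights $w_{ij}$ through), and your argument is precisely that proof spelled out: compute the gradient of the smooth loss, add the subdifferential of the weighted $\ell_1$ penalty, read off the two cases, and for the ``moreover'' clause use strict convexity of $\|\cdot\|_2^2$ in the fitted values to show $G_j(\hat{\theta})$ is constant across the solution set.
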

  \begin{proof}
    The proof of the lemma is identical to the proof of Lemma (A.1) in \citet{meinshausen2006hdg} (except for inclusion of general weights $w_{ij}$) and is therefore omitted.
  \end{proof}

\begin{proof}[of Theorem \ref{Thm_ALasso}]
    To prove (i), note that by Bonferroni's inequality, and the fact that $\card{(pa_i)} = o(n)$ as $n \rightarrow \infty$, it suffices to show that there exists some $c_{(i)} > 0$ such that for all $i \in V$ and for every $j \in pa_i$,
    \[
        \prob\left\{ \sign{(\hat{\theta}^{i,pa_i}_j )} = \sign{(\theta^{i,pa_i}_j )} \right\} = 1 - O\left\{\exp{( -c_{(i)} n^\zeta )}\right\} \hspace{0.5cm} \text{ as } n \rightarrow \infty
    \]
    Let $\hat{\theta}^{i,pa_i}(\beta)$ be the estimate of $\theta^{i,pa_i}$ in (\ref{eqnThetaHat}), with the $j$th component fixed at a constant value $\beta$,
    \begin{equation}\label{eqnThetaHat_beta}
        \hat{\theta}^{i,pa_i}(\beta) = \argmin_{\theta \in \Theta_\beta} { \left\{
                                                n^{-1} \| \mathcal{X}_i - \mathcal{X} \theta \|_2^2 + \lambda \sum_{k=1}^{p}{|\theta_k| w_k}
                                            \right\} }
    \end{equation}
    \noindent where $ \Theta_\beta \equiv \left\{ \theta \in \mathbb{R}^p: \theta_j = \beta, \theta_k = 0, \forall k \notin pa_i \right\} $.
    Note that for $\beta = \hat{\theta}^{i,pa_i}_j$, $\hat{\theta}^{i,pa_i}(\beta)$ is identical to $\hat{\theta}^{i,pa_i}$. Thus, if $\sign{(\hat{\theta}^{i,pa_i}_j )} \neq \sign{(\theta^i_j )}$, there would exist some $\beta$ with $\sign{(\beta)}\sign{(\theta^i_j)} \le 0$ such that $\hat{\theta}^{i,pa_i}(\beta)$ is a solution to (\ref{eqnThetaHat_beta}). Since $\theta^i_j \neq 0, \forall j \in pa_i$, it suffices to show that for all $\beta$ with $\sign{(\beta)}\sign{(\theta^i_j)} < 0$, with high probability, $\hat{\theta}^{i,pa_i}(\beta)$ can not be a solution to (\ref{eqnThetaHat_beta}).
    Without loss of generality, we consider the case where $\theta^i_j > 0$ ($\theta^i_j < 0$ can be shown similarly). Then if $\beta \le 0$, from Lemma \ref{lemma_AlassoKKT}, $\hat{\theta}^{i,pa_i}(\beta)$ can be a solution to (\ref{eqnThetaHat_beta}) only if $G_j(\hat{\theta}^i(\beta)) \ge -\lambda w_{ij}$. Hence, it suffices to show that for some $c_{(i)} > 0$ and all $j \in pa_i$ with $\theta^i_j > 0$,
    \begin{equation}\label{eqnThetaHat_beta2}
        \prob\left[ \sup_{\beta \le 0}{ \{ G_j\left( \hat{\theta}^i(\beta) \right) < - \lambda w_{ij} \} } \right] = 1 - O\left\{ \exp{(-c_{(i)}n^\zeta})\right\} \hspace{0.5cm} \text{ as } n \rightarrow \infty
    \end{equation}
    Define,
    \begin{equation}\label{eqnResid}
        \mathcal{R}_i(\beta) = \mathcal{X}_i - \mathcal{X} \hat{\theta}^i(\beta)
    \end{equation}
    \noindent For every $j \in pa_i$ we can write,
    \begin{equation}\label{eqnXj}
        X_j = \sum_{ k \in pa_i \backslash \{j\} }{ \theta^{ j , pa_i \backslash \{j\} }_k X_k } + Z_j
    \end{equation}
    where $Z_j$ is independent of $ \{ X_k; k \in pa_i \backslash \{j\} \} $. Then by (\ref{eqnXj}),
    \[
        G_j\left(\hat{\theta}^i(\beta)\right) = -2n^{-1} \T{\mathcal{Z}}_j \mathcal{R}_i(\beta) -
                \sum_{ k \in pa_i \backslash \{j\} }{ \theta^{ j , pa_i \backslash \{j\} } 2n^{-1} \T{\mathcal{X}_k} \mathcal{R}_i(\beta) }
    \]
    By Lemma \ref{lemma_AlassoKKT}, it follows that for all $k \in pa_i \backslash \{j\}$, $| G_k\left(\hat{\theta}^i(\beta)\right) | =     | 2n^{-1} \T{\mathcal{X}_k} \mathcal{R}_i(\beta) | \le \lambda w_{ik}$, thus,
    \begin{equation}\label{eqnG2}
        G_j\left(\hat{\theta}^i(\beta)\right) \le -2n^{-1} \T{\mathcal{Z}}_j \mathcal{R}_i(\beta) + \lambda
                \sum_{ k \in pa_i \backslash \{j\} }{ |\theta^{ j , pa_i \backslash \{j\} }| w_{ik} }
    \end{equation}
    \noindent Using the fact that $|\theta^{ j , pa_i \backslash \{j\} }| \le 1$, it suffices to show that
    \begin{equation}\label{eqnG3}
        \prob\left[ \sup_{\beta \le 0}{ \{ -2n^{-1} \T{\mathcal{Z}}_j\mathcal{R}_i(\beta) \} } <
                            -\lambda \sum_{k \in pa_i}{w_{ik}} \right] =
        1 - O\left\{\exp{(-c_{(i)}n^\zeta)}\right\} \hspace{0.5cm} \text{ as } n \rightarrow \infty,
    \end{equation}
    or equivalently,
    \begin{equation}\label{eqnG4}
        \prob\left[ \inf_{\beta \le 0}{ \{ 2n^{-1} \T{\mathcal{Z}}_j\mathcal{R}_i(\beta) \} } < \lambda \sum_{k \in pa_i}{w_{ik}} \right] = O\left\{\exp{(-c_{(i)}n^\zeta)}\right\} \hspace{0.5cm} \text{ as } n \rightarrow \infty.
    \end{equation}
    It is shown in Lemma A.2. of \citet{meinshausen2006hdg} that for any $q > 0$, there exists $c_{(i)} > 0$ such that for all $j \in pa_i$ with $\theta^i_j > 0$
    \begin{equation}\label{eqnG5}
        \prob\left[ \inf_{\beta \le 0}{ \{ 2n^{-1} \T{\mathcal{Z}}_j\mathcal{R}_i(\beta) \} } \le q\lambda \right] = O\left\{\exp{(-c_{(i)}n^\zeta)}\right\} \hspace{0.5cm} \text{ as } n \rightarrow \infty.
    \end{equation}
    However, by definition $w_{ik} \le 1$ and therefore, $\sum_{k \in pa_i}{w_{ik}} \le \card{(pa_i)} \le 1$, which implies that (i) follows from \ref{eqnG5}.

%
    To prove (ii), note that the event $\hat{pa}_i \nsubseteq pa_i$ is equivalent to the event that there exists a node $j \in \underline{i-1} \backslash pa_i$ such that $\hat{\theta}^i_j \neq 0$, i.e.
    \begin{equation}\label{eqn_ii1}
        \prob\left( \hat{pa}_i \subseteq pa_i \right) =
                1 - \prob\left( \exists j \in \underline{i-1} \backslash pa_i: \hat{\theta}^i_j \neq 0\right)
    \end{equation}
    By Lemma \ref{lemma_AlassoKKT}, and using the fact that by definition $w_{ij} \ge 1$
    \begin{eqnarray*}
      \prob\left( \exists j \in \underline{i-1} \backslash pa_i: \hat{\theta}^i_j \neq 0\right)
      & = &
        \prob\left( \exists j \in \underline{i-1} \backslash pa_i: | G_j(\hat{\theta}^{i,pa_i}) | \ge w_{ij} \lambda \right) \\
      & \le &
        \prob\left( \exists j \in \underline{i-1} \backslash pa_i: | G_j(\hat{\theta}^{i,pa_i}) | \ge q \lambda
          \text{ and } w_{ij} \lambda \le q \lambda \text{  for some } q \ge 1 \right) \\
      & \le &
        \prob\left( \exists j \in \underline{i-1} \backslash pa_i: w_{ij} \le q \text{  for some } q \ge 1 \right)
    \end{eqnarray*}
    Since $w_{ij} = 1 \vee |\tilde{\theta}^i_j|^{-\gamma}$, with $\tilde{\theta}^i_j$ the lasso estimate of the adjacency matrix from (\ref{AOpt_DAG_lasso}), using Lemma \ref{lemma_AlassoKKT},
    \begin{eqnarray*}
      \prob\left( \exists j \in \underline{i-1} \backslash pa_i: w_{ij} \le q \text{  for some } q>0 \right)
      & = &
      \prob\left( \exists j \in \underline{i-1} \backslash pa_i: |\tilde{\theta}^i_j| \ge q^{-1/\gamma} \text{  for some } q \ge 1 \right) \\
      & \le &
      \prob\left( \exists j \in \underline{i-1} \backslash pa_i: |\tilde{\theta}^i_j| \ge q' \text{  for some } q'>0 \right) \\
      & \le &
      \prob\left( \exists j \in \underline{i-1} \backslash pa_i: \tilde{\theta}^i_j \neq 0 \right) \\
      & = &
      \prob\left( \exists j \in \underline{i-1} \backslash pa_i: | G_j(\tilde{\theta}^{i,pa_i}) | \ge \lambda^0 \right)
    \end{eqnarray*}
    Since $\card{(pa_i)} = o(n)$, we can assume without loss of generality that $\card{(pa_i)} < n$, which implies that $\tilde{\theta}^{i,pa_i}$ is an almost sure unique solution to (\ref{eqnThetaHat}) with $\mathcal{I} = pa_i$. Let
    $$
        \mathcal{E} = \left\{ \max_{j \in \underline{i-1} \backslash pa_i}{| G_j(\tilde{\theta}^{i,pa_i}) |} < \lambda^0 \right\}.
     $$
    Then conditional on event $\mathcal{E}$, it follows from the first part of Lemma \ref{lemma_AlassoKKT} that $\tilde{\theta}^{i,pa_i}$ is also a solution of the unrestricted weighted lasso problem (\ref{eqnThetaHat}) with $\mathcal{I} = \underline{i-1}$. Since $\tilde{\theta}^{i,pa_i}_j = 0, \hspace{0.05cm} \forall j \in \underline{i-1} \backslash pa_i$, it follows from the second part of Lemma \ref{lemma_AlassoKKT} that $\tilde{\theta}^{i}_j = 0, \hspace{0.05cm} \forall j \in \underline{i-1} \backslash pa_i$. Hence
    \begin{eqnarray}\label{eqn_ii2}
        \prob\left( \exists j \in \underline{i-1} \backslash pa_i: \tilde{\theta}^i_j \neq 0 \right)
                    \le
                    1- \prob(\mathcal{E}) =
                    \prob\left( \max_{j \in \underline{i-1} \backslash pa_i}{|G_j(\tilde{\theta}^{i,pa_i})|} \ge \lambda^0 \right)
    \end{eqnarray}
    \noindent where,
    \begin{equation}\label{eqn_ii3}
        G_j(\tilde{\theta}^{i,pa_i}) = -2n^{-1} \T{\mathcal{X}}_j (\mathcal{X}_i - \mathcal{X}\tilde{\theta}^{i,pa_i})
    \end{equation}
    Since $\card{(V)} = O(n^a)$ for some $a > 0$, Bonferroni's inequality implies that to verify (\ref{eqn_ii1}) it suffices to show that there exists a constant $c_{(ii)} > 0$ such that for all $j \in \underline{i-1} \backslash pa_i$,
    \begin{equation}\label{eqn_ii4}
        \prob\left( |G_j(\tilde{\theta}^{i,pa_i}) | \ge \lambda^0 \right) =
                O\left\{\exp{(-c_{(ii)}n^\zeta)}\right\} \thickspace \text{ as } n \rightarrow \infty.
    \end{equation}
    where $R_j \sim N(0,\sigma^2_j), \thickspace \sigma^2_j \le 1$ and $R_j$ is independent from $X_l, l \in pa_i$. Similarly, with $R_i$ satisfying the same requirements as $R_j$, we get $X_i = \sum_{k \in pa_i}{\theta^{i,pa_i}_k X_k + R_i}$.

    Denote by $\mathcal{X}_{pa_i}$ the columns of $\mathcal{X}$ corresponding to $pa_i$ and let $\theta_{pa_i}$ the column vector of coefficients with dimension $\card{(pa_i)}$ corresponding to $pa_i$. Then,
    \begin{eqnarray*}\label{eqn_ii5}
        \prob\left\{ |G_j(\tilde{\theta}^{i,pa_i}) | \ge \lambda^0 \right\}
        & = &
        \prob\left\{ |-2n^{-1} \T{\mathcal{X}}_j(\mathcal{X}_i - \mathcal{X} \tilde{\theta}^{i,pa_i}) | \ge \lambda^0 \right\} \\
        & = &
        \prob\left[ |-2n^{-1} \T{ \{ \mathcal{X}_{pa_i} \theta^{j,pa_i}_{pa_i} + \mathcal{R}_j \} }
            \{ \mathcal{X}_{pa_i}( \theta^{i,pa_i}_{pa_i} - \tilde{\theta}^{i,pa_i}_{pa_i} ) + \mathcal{R}_i \} | \ge \lambda^0 \right].
    \end{eqnarray*}
    Therefore,
    \begin{eqnarray*}\label{eqn_ii6}
        \prob\left\{ |G_j(\tilde{\theta}^{i,pa_i}) | \ge \lambda^0 \right\}
        & \le &
        \prob\left\{ |-2n^{-1} \T{( \theta^{i,pa_i}_{pa_i} - \tilde{\theta}^{i,pa_i}_{pa_i} )} \T{\mathcal{X}}_{pa_i} \mathcal{X}_{pa_i} \theta^{j,pa_i}_{pa_i} | \ge \lambda^0 / 3 \right\} \\
        & + &
        \prob\left\{ |-2n^{-1} \T{( \theta^{i,pa_i}_{pa_i} - \tilde{\theta}^{i,pa_i}_{pa_i} )} \T{\mathcal{X}}_{pa_i}\mathcal{R}_j | \ge \lambda^0 / 3 \right\} \\
        & + &
        \prob\left\{ |-2n^{-1} \T{( \mathcal{X}_{pa_i} \theta^{j,pa_i}_{pa_i} + \mathcal{R}_j)} \mathcal{R}_i | \ge \lambda^0 / 3 \right\} \\
        & \equiv &
        \textrm{I} + \textrm{II} + \textrm{III}
    \end{eqnarray*}
    Let $\mathbf{1}_{pa_i}$ denote a vector of 1's of dimension $\card{(pa_i)}$. Then, using the fact that $|\theta^{j,pa_i}_{l}| \le 1, \text{for all } l \in pa_i$, we can write $\textrm{I} \le \prob\left(
                        2 \| \theta^{i,pa_i}_{pa_i} - \tilde{\theta}^{i,pa_i}_{pa_i} \|_{\infty}
                        n^{-1} \T{(\mathcal{X}_{pa_i}\mathbf{1}_{pa_i})} \mathcal{X}_{pa_i}\mathbf{1}_{pa_i} \ge \lambda^0 / 3
                    \right) $.
    Then $\T{\mathcal{X}}_{pa_i}\mathcal{X}_{pa_i} \sim \mathbb{W}_{\card{(pa_i)}}(\Sigma_{pa_i},n)$ where $\mathbb{W}_{m}(\Sigma,n)$ denotes a Wishart distribution with mean $n\Sigma$. Hence, from properties of the Wishart distribution, we get $\T{(\mathcal{X}_{pa_i}\mathbf{1}_{pa_i})} \mathcal{X}_{pa_i}\mathbf{1}_{pa_i} \sim \mathbb{W}_{1}(\T{\mathbf{1}}_{pa_i}\Sigma_{pa_i}\mathbf{1}_{pa_i},n)$.

    Since $pa_i$ also forms a \textsc{dag}, the eigenvalues $\Sigma_{pa_i}$ are bounded (see Remark \ref{remSigma}), and hence
    \begin{equation}\label{eqn_ii7}
    \T{\mathbf{1}}_{pa_i}\Sigma_{pa_i}\mathbf{1}_{pa_i} \le \card{(pa_i)} \phi_{\max}(\Sigma_{pa_i})
    \end{equation}
    Therefore, if $Z \sim \chi^2_1$, $n^{-1} \T{(\mathcal{X}_{pa_i}\mathbf{1}_{pa_i})} \mathcal{X}_{pa_i}\mathbf{1}_{pa_i}$ is stochastically smaller than $\card{(pa_i)}\phi_{\max}(\Sigma_{pa_i})Z$. On the other hand, by Theorem \ref{Thm_LassoConsistency},
    \begin{equation*}
        \|A - \tilde{A} \|_F = O_p \left\{ {(n^{-1} s \log{p})}^{1/2} \right\}
    \end{equation*}
    and hence
    \begin{equation}\label{eqn_ii8}
        \| \theta^{i,pa_i}_{pa_i} - \tilde{\theta}^{i,pa_i}_{pa_i} \|_{\infty} = O_p \left\{ {(n^{-1} s \log{p})}^{1/2} \right\}
    \end{equation}
    Noting that $\card{(pa_i)} = O(n^b), \thickspace b < 1/2 $ and $p = O(n^a), \thickspace a > 0$, (\ref{eqn_ii7}) and (\ref{eqn_ii8}) imply that
    \begin{equation*}
        \| \theta^{i,pa_i}_{pa_i} - \tilde{\theta}^{i,pa_i}_{pa_i} \|_{\infty} \card{(pa_i)} \phi_{\max}(\Sigma_{pa_i})
        = O_p \left\{ {(s n^{2b-1} a \log{n} )}^{1/2} \right\}
    \end{equation*}
    By (A-$0^\prime$), $s n^{2b-1} \log{n} = o(1)$ and hence by Slutsky's Theorem and properties of the $\chi^2$-distribution, there exists $c_{(\textrm{I})} > 0$ such that for all $j \in \underline{i-1} \backslash pa_i$,
    \begin{equation*}
        \textrm{I} =  O\left\{ \exp{(-c_{(\textrm{I})} n^\zeta)} \right\} \thickspace \text{ as } n \rightarrow \infty
    \end{equation*}
    Using a similar argument, for $\textrm{II}$ we can write,
    \begin{equation}\label{eqn_ii9}
        \textrm{II} \le \prob\left(
                        2n^{-1} \| \theta^{i,pa_i}_{pa_i} - \tilde{\theta}^{i,pa_i}_{pa_i} \|_{\infty}
                        | \mathbf{1}_{pa_i} \T{\mathcal{X}}_{pa_i} \mathcal{R}_j | \ge \lambda^0 / 3
                    \right).
    \end{equation}
    Since columns of $\mathcal{X}_{pa_i}$ correspond to samples from normal random variables with mean zero and are all independent of $\mathcal{R}_j$, it suffices to show that there exists $c_{(\textrm{II})} > 0$ such that for all $j \in \underline{i-1} \backslash pa_i$ and for all $k \in pa_i$,
    \begin{equation}\label{eqn_ii10}
        \prob\left(
            2n^{-1} \| \theta^{i,pa_i}_{pa_i} - \tilde{\theta}^{i,pa_i}_{pa_i} \|_{\infty} \card{(pa_i)} | \T{\mathcal{X}_k} \mathcal{R}_j | \ge \lambda^0 / 3 \right)
            = O\left\{ \exp{(-c_{(\textrm{II})} n^\zeta)} \right\} \thickspace \text{ as } n \rightarrow \infty
    \end{equation}
    By (\ref{eqn_ii8}) and (A-$0^\prime$), the random variable on the left hand side of (\ref{eqn_ii10}) is stochastically smaller than $2n^{-1}|\mathcal{X}_k \mathcal{R}_j|$. By independence of $X_k$ and $R_j$, $\E(X_k R_j) = 0$. Also using Gaussianity of both $X_k$ and $R_j$, there exists $g < \infty$ such that $\E\{\exp{(|X_k R_j|)}\} \le g$. Since $\lambda^0 = O\{{(\log p/n)}^{1/2}\}$, by Bernstein's inequality \citep{vandervaart1996wca}, $\prob(2n^{-1}|\mathcal{X}_k \mathcal{R}_j| > \lambda^0/3) \le \exp(-c_{(\textrm{II})} n^\zeta)$ for some $c_{(\textrm{II})} > 0$ and hence (\ref{eqn_ii10}) is satisfied.
    Finally, for $\textrm{III}$ we have
    \begin{equation}\label{eqn_ii11}
     \prob\left\{ |-2n^{-1} \T{( \mathcal{X}_{pa_i} \theta^{j,pa_i}_{pa_i} + \mathcal{R}_i)} \mathcal{R}_j | \ge \lambda^0 / 3 \right\}
     = \prob\left\{ |-2n^{-1} \T{ \mathcal{X}_{i} } \mathcal{R}_j | \ge \lambda^0 / 3 \right\}
    \end{equation}
    and using the Bernstein's inequality we conclude that there exists $c_{(\textrm{III})} > 0$ such that for all $j \in \underline{i-1} \backslash pa_i$ and for all $k \in pa_i$, $\textrm{III} = O\left\{ \exp{(-c_{(\textrm{III})} n^\zeta)} \right\} \thickspace \text{ as } n \rightarrow \infty$. The proof of (ii) is then complete by taking $c_{(ii)}$ to be the minimum of $c_{(\textrm{I})}, \ldots, c_{(\textrm{III})}$.

    To prove (iii), note that $\prob \left( pa_i \subseteq \hat{pa}_i \right) = 1 - \prob \left(\exists j \in pa_i : \hat{\theta}^i_j = 0 \right)$, and let $ \mathcal{E} = \left\{ \max_{k \in \underline{i-1} \backslash pa_i}{| G_j(\hat{\theta}^{i,pa_i}) |} < \lambda w_{ij} \right\}$.

    It follows from an argument similar to the proof of (ii) that conditional on $\mathcal{E}$, $\hat{\theta}^{i,pa_i}$ is an almost sure unique solution of the unrestricted adaptive lasso problem (\ref{eqnThetaHat}) with $\mathcal{I} = \underline{i-1}$. Therefore,
    \[
        \prob \left(\exists j \in pa_i : \hat{\theta}^i_j = 0 \right) \le \prob \left(\exists j \in pa_i : \hat{\theta}^i_j = 0 \right) + \prob \left(\mathcal{E}^c \right).
    \]
    From (i), there exists a $c_1 > 0$ such that $\prob \left(\exists j \in pa_i : \hat{\theta}^i_j = 0 \right) = O(\exp{(-c_1n^\zeta)})$ and it was shown in (ii) that $\prob \left(\mathcal{E}^c \right) = O\{\exp{(-c_2n^\zeta)}\}$ for some $c_2 > 0$. Thus (iii) follows from Bonferroni's inequality.

    Finally, it is easy to see that since $p = O(n^a)$, the claim in (iv) also follows from (ii) and (iii) and Bonferroni's inequality.
\end{proof}
\begin{proof}[of Theorem \ref{Thm_tuning}]
    We first show that if $AN_i \cap AN_j = \emptyset$, then $i$ and $j$ are independent. Since $\Sigma = \Lambda \T{\Lambda}$ and $\Lambda$ is lower triangular,
    \begin{equation}\label{eqnThm3_1}
        \Sigma_{ij} = \sum_{k=1}^{\min{(i,j)}}{\Lambda_{ik}\Lambda_{jk}}
    \end{equation}
    We assume without loss of generality that $i < j$. The argument for the case $j > i$ is similar. Suppose for all $k = 1, \ldots, i$ that $\Lambda_{ik} = 0$ or $\Lambda_{jk} = 0$, then by (\ref{eqnThm3_1}) $i$ and $j$ are independent. However, by Lemma \ref{lmLambdaA}, $\Lambda_{jk}$ is the influence of $k$th node on $j$, and this is zero only if there is no path from $k$ to $j$. Clearly, if $i$ is an ancestor of $j$, we have $\Sigma_{ij} \ne 0$. On the other hand, if there is no node $k \in \underline{i-1}$ such that $k$ influences both $i$ and $j$, (i.e. $k$ is a common ancestor of $i$ and $j$) then for all $k = 1, \ldots, i$ we have ${\Lambda_{ik}\Lambda_{jk}} = 0$ and the claim follows.

    Using Bonferroni's inequality twice and Lemma \ref{lemma_AlassoKKT}, we get
    \begin{eqnarray*}
        \prob( \exists i \in V: \hat{AN}_i \nsubseteq AN_i )
        &\le& p \max_{i \in V}{ \prob\left( \exists j \in \underline{i-1} \backslash AN_i : j \in \hat{pa}_i \right) } \\
        &\le& p (i-1) \max_{i \in V, j \in \underline{i-1} \backslash AN_i}{ \prob\left( j \in \hat{pa}_i \right) } \\
        &\le& p (i-1) \max_{i \in V, j \in \underline{i-1} \backslash AN_i}{ \prob\left\{ | G_j(\hat{\theta}^{i,AN_i}) | \ge \lambda w_{ij} \right\}. }
    \end{eqnarray*}
    However, by definition $w_{ij} \ge 1$, and hence it suffices to show that,
    \begin{equation}\label{eqnThm3_2}
        (i-1)p \max_{i \in V, j \in \underline{i-1} \backslash AN_i}{
                            \prob\left\{ | G_j(\hat{\theta}^{i,AN_i}) | \ge \lambda \right\} } \le \alpha.
    \end{equation}
    Note that $G_j(\hat{\theta}^{i,AN_i}) = -2n^{-1} \T{\mathcal{X}}_j ( \mathcal{X}_i - \mathcal{X}\hat{\theta}^{i,AN_i} )$ and $X_j$ is independent of $X_k$ for all $k \in AN_i$. Therefore, conditional on $\mathcal{X}_{AN_i}$, $G_j(\hat{\theta}^{i,AN_i}) \sim (0,4R^2/n)$, where $R^2 = n^{-1}\| \mathcal{X}_i - \mathcal{X}\hat{\theta}^{i,AN_i} \|_2^2 \le n^{-1}\| \mathcal{X}_i \|_2^2 = 1$, by definition of $\hat{\theta}^{i,AN_i}$ and the fact that columns of the data matrix are scaled.

    It follows that for all $j \in \underline{i-1} \backslash AN_i$, $\prob\left\{ | G_j(\hat{\theta}^{i,AN_i}) | \ge \lambda \mid \mathcal{X}_{AN_i} \right\} \le 2 \{ 1-\Phi(n^{1/2} \lambda / 2) \} $, where $\Phi$ is the cumulative distribution function for standard normal random variable. Using the choice of $\lambda$ proposed in (\ref{eqnLambda}), we get $
        \prob\left\{ | G_j(\hat{\theta}^{i,AN_i}) | \ge \lambda \mid \mathcal{X}_{AN_i} \right\} \le \frac{\alpha}{(i-1)p}
    $, and the result follows.
\end{proof}

\bibliography{ShojaieBib}

\begin{thebibliography}{}

\bibitem[Banerjee et~al., 2008]{banerjee2008mst}
Banerjee, O., El~Ghaoui, L., and d'Aspremont Alexandre (2008).
\newblock {Model selection through sparse maximum likelihood estimation for
  multivariate Gaussian or binary data}.
\newblock {\em Journal of Machine Learning Research}, 9:485--516.

\bibitem[Fan et~al., 2007]{fan2007nev}
Fan, J., Feng, Y., and Wu, Y. (2007).
\newblock {Network exploration via adaptive LASSO and SCAD penalties}.
\newblock {\em Manuscript}.

\bibitem[Friedman et~al., 2008a]{friedman2008rpg}
Friedman, J., Hastie, T., and Tibshirani, R. (2008a).
\newblock {Regularization paths for generalized linear models via coordinate
  descent}.
\newblock {\em Technical report}.

\bibitem[Friedman et~al., 2008b]{friedman2008sic}
Friedman, J., Hastie, T., and Tibshirani, R. (2008b).
\newblock {Sparse inverse covariance estimation with the graphical lasso.}
\newblock {\em Biostatistics}, 9(3):432.

\bibitem[Granger, 1969]{granger1969icr}
Granger, C. (1969).
\newblock {Investigating causal relations by econometric models and
  cross-spectral methods}.
\newblock {\em Econometrica}, pages 424--438.

\bibitem[Heckerman et~al., 1995]{heckerman1995lbn}
Heckerman, D., Geiger, D., and Chickering, D. (1995).
\newblock {Learning Bayesian networks: The combination of knowledge and
  statistical data}.
\newblock {\em Machine Learning}, 20(3):197--243.

\bibitem[Huang et~al., 2006]{huang2006cms}
Huang, J., Liu, N., Pourahmadi, M., and Liu, L. (2006).
\newblock {Covariance matrix selection and estimation via penalised normal
  likelihood}.
\newblock {\em Biometrika}, 93(1):85--98.

\bibitem[Huang et~al., 2008]{huang2008als}
Huang, J., Ma, S., and Zhang, C. (2008).
\newblock {Adaptive Lasso For Sparse High-Dimensional Regression Models}.
\newblock {\em Statistica Sinica}, 18:1603--1618.

\bibitem[Kalisch and B{\"u}hlmann, 2007]{kalisch2007ehd}
Kalisch, M. and B{\"u}hlmann, P. (2007).
\newblock {Estimating High-Dimensional Directed Acyclic Graphs with the
  PC-Algorithm}.
\newblock {\em Journal of Machine Learning Research}, 8:613--636.

\bibitem[Kao et~al., 2004]{kao2004tbd}
Kao, K., Yang, Y., Boscolo, R., Sabatti, C., Roychowdhury, V., and Liao, J.
  (2004).
\newblock {Transcriptome-based determination of multiple transcription
  regulator activities in Escherichia coli by using network component
  analysis}.
\newblock {\em Proceedings of the National Academy of Sciences},
  101(2):641--646.

\bibitem[Knight and Fu, 2000]{knight2000alt}
Knight, K. and Fu, W. (2000).
\newblock {Asymptotics for lasso-type estimators}.
\newblock {\em Annals Of Statistics}, 28(5):1356--1378.

\bibitem[Lam and Fan, 2008]{lam2008src}
Lam, C. and Fan, J. (2008).
\newblock {Sparsity and Rate of Convergence in Large Covariance Matrix
  Estimation}.
\newblock {\em Manuscript}.

\bibitem[Lauritzen, 1996]{lauritzen1996gm}
Lauritzen, S. (1996).
\newblock {\em {Graphical models}}.
\newblock Oxford Univ Press.

\bibitem[Leclerc, 2008]{leclerc2008ssr}
Leclerc, R. (2008).
\newblock {Survival of the sparsest: robust gene networks are parsimonious}.
\newblock {\em Mol Systems Biology}, 4(1).

\bibitem[Markowetz and Spang, 2007]{markowetz2007inferring}
Markowetz, F. and Spang, R. (2007).
\newblock {Inferring cellular networks--a review}.
\newblock {\em BMC Bioinformatics}, 8(Suppl 6):S5.

\bibitem[Meinshausen and B\"{u}hlmann, 2006]{meinshausen2006hdg}
Meinshausen, N. and B\"{u}hlmann, P. (2006).
\newblock {High-dimensional graphs and variable selection with the Lasso}.
\newblock {\em Annals Of Statistics}, 34(3):1436.

\bibitem[Pearl, 2000]{pearl2000caus}
Pearl, J. (2000).
\newblock {\em {Causality: Models, Reasoning, and Inference}}.
\newblock Cambridge Univ Press.

\bibitem[Rothman et~al., 2008]{RothEtal:08}
Rothman, A., Bickel, P., Levina, E., and Zhu, J. (2008).
\newblock {Sparse permutation invariant covariance estimation}.
\newblock {\em Electronic Journal of Statistics}, 2:494--515.

\bibitem[Sachs et~al., 2003]{sachs2003cps}
Sachs, K., Perez, O., Pe'er, D., Lauffenburger, D., and Nolan, G. (2003).
\newblock {Causal protein-signaling networks derived from multiparameter
  single-cell data}.
\newblock {\em Science}, 308(5721):504--6.

\bibitem[Shojaie and Michailidis, 2009a]{shojaie2009NetBasedGSA}
Shojaie, A. and Michailidis, G. (2009a).
\newblock {Analysis of Gene Sets Based on the Underlying Regulatory Network}.
\newblock {\em Journal of Computational Biology}, 16(3):407--426.

\bibitem[Shojaie and Michailidis, 2009b]{shojaie2009NetEnrich}
Shojaie, A. and Michailidis, G. (2009b).
\newblock {Network Enrichment Analysis in Complex Experiments}.
\newblock {\em Statistical Applications in Genetics and Molecular Biology,
  \emph{tentatively accepted}}.

\bibitem[Spirtes et~al., 2000]{spirtes2000cpa}
Spirtes, P., Glymour, C., and Scheines, R. (2000).
\newblock {\em {Causation, Prediction, and Search}}.
\newblock MIT Press.

\bibitem[Tibshirani, 1996]{tibshirani1996rss}
Tibshirani, R. (1996).
\newblock {Regression shrinkage and selection via the lasso}.
\newblock {\em Journal of the Royal Statistical Society. Series B.
  Methodological}, 58(1):267--288.

\bibitem[Tsamardinos et~al., 2006]{tsamardinos2006mmh}
Tsamardinos, I., Brown, L., and Aliferis, C. (2006).
\newblock {The max-min hill-climbing Bayesian network structure learning
  algorithm}.
\newblock {\em Machine Learning}, 65(1):31--78.

\bibitem[Van~der Vaart and Wellner, 1996]{vandervaart1996wca}
Van~der Vaart, A. and Wellner, J. (1996).
\newblock {\em {Weak convergence and empirical processes}}.
\newblock Springer.

\bibitem[Yuan and Lin, 2007]{yuan2007msa}
Yuan, M. and Lin, Y. (2007).
\newblock {Model selection and estimation in the Gaussian graphical model}.
\newblock {\em Biometrika}, 94(1):19.

\bibitem[Zou, 2006]{zou2006ala}
Zou, H. (2006).
\newblock {The Adaptive Lasso and Its Oracle Properties}.
\newblock {\em Journal of the American Statistical Association},
  101(476):1418--1429.

\end{thebibliography}

\end{document}